\newtheorem{proposition}[theorem]{Proposition}
\newcommand{\bull}{\rule{.85ex}{1ex} \par \bigskip}
\newenvironment{proof}{\noindent {\bf Proof:\ }}{\hfill \bull}
\newcommand{\tuple}[1]{\ensuremath{\langle #1 \rangle}}
\newcommand{\call}[2]{CALL$_{#1,#2}$}
\newcommand{\fmnote}[1]{}
\title{Simple epistemic planning: generalised gossiping}
\author{Martin C. Cooper \ Andreas Herzig \  Faustine Maffre \\
Fr\'{e}d\'{e}ric Maris  \ Pierre R\'{e}gnier\institute{IRIT,
University of Toulouse III, 31062 Toulouse, France, email:
\{cooper$|$herzig$|$maffre$|$maris$|$regnier\}@irit.fr} }
\begin{document}

\maketitle
\bibliographystyle{ecai}

\begin{abstract}
The gossip problem, in which information (known as secrets) must be
shared among a certain number of agents using the minimum number of
calls, is of interest in the conception of communication networks and
protocols. We extend the gossip problem to arbitrary epistemic
depths. For example, we may require not only that all agents know all
secrets but also that all agents know that all agents know all
secrets. We give optimal protocols 
for various versions of the
generalised gossip problem, depending on the graph of communication
links, in the case of two-way communications, one-way communications
and parallel communication. We also study different variants which
allow us to impose negative goals such as that certain agents must
not know certain secrets. We show that in the presence of negative
goals testing the existence of a successful protocol is NP-complete
whereas this is always polynomial-time in the case of purely positive
goals.
\end{abstract}

\section{Introduction}

\fmnote{
Section coupee en deux et remaniee.
}
We consider communication problems concerning $n$ agents. We consider
that initially, for $i=1,\ldots,n$, agent $i$ has some information
$s_i$, also known as this agent's secret since, initially, the other
agents do not know this information. In many applications, this
corresponds to information that agent $i$ wishes to share with all
other agents, such as agent $i$'s signature on a contract or the
dates when agent $i$ is available for a meeting. On the other hand,
it may be confidential information which is only to be shared with a
subset of the other agents, such as agent $i$'s telephone number,
cryptographic key, password or credit card number. More mundanely, it
could simply be some gossip that agent $i$ wants to share. Indeed,
the simplest version of the problem in which all agents want to
communicate their secrets to all other agents (using the minimum
number of communications) is traditionally known as the gossip
problem. Several variants have been studied in the literature, and a
survey of these alternatives and the associated results has been
published \cite{Hedetniemi1988}.

The gossip problem is a particular case of a multiagent epistemic planning problem. 
We view it as an epistemic counterpart of blocksworld problems
where the complexity of epistemic planning problems can be illustrated in a nice way.
\fmnote{
Ajoute.
}
We demonstrate this by studying several variants of the classic problem: 
by supposing that not all pairs of agents can communicate directly, by
allowing parallel or one-way communications, and by introducing the
notion of confidential information which should not be shared with
all other agents. However, our main contribution is to study the
gossip problem at different epistemic depths. In the classic gossip
problem, the goal is for all agents to know all secrets (which
corresponds to epistemic depth 1). The equivalent goal at epistemic
depth 2 is that all agents know that all agents know all the secrets;
at depth 3, all agents must know that all agents know that all agents
know all the secrets. For example, in a commercial setting, if the
secrets are the agents' agreement to the terms of a joint contract,
then an agent may not authorize expenditure on the project before
knowing that all other agents know that all agents agree to the terms
of the contract.
We provide algorithms for these variants and establish their optimality in most of the cases.
%show that in many cases the problem is solvable in polynomial time

The paper is organized as follows. In the next section we formally introduce 
epistemic planning and 
the epistemic version of the classic gossip problem Gossip$_G$($d$).
In Section~\ref{sec:gossip} we study the properties of Gossip$_G$($d$). 
In Section~\ref{sec:one-way} we turn our attention to the version of
this problem in which all communications are one-way (such as e-mails
rather than telephone calls). In Section~\ref{sec:parallel} we study
a parallel version in which calls between different agents can take
place simultaneously. In each of these three cases, we give a
protocol which is optimal (given certain conditions on the graph $G$)
assuming we want to attain all positive epistemic goals up to depth
$d$. We then consider versions of the gossip problem with some
negative goals. This version of the gossip problem has obvious
applications concerning confidential information which is only to be
broadcast to a subset of the other agents. In
Section~\ref{sec:gossip-neg} we show that determining the existence
of a plan which attains a mixture of positive and negative goals is
NP-complete. In Section~\ref{sec:change} we show that allowing agents
to change their secrets (when secrets correspond, for example, to
passwords or telephone numbers) allows more problems to be solved,
but testing the existence of a plan remains NP-complete. We conclude
with a discussion in Section~\ref{sec:discussion}.

\section{Epistemic planning and the gossip problem}
\label{sec:epiPlan}

Dynamic Epistemic Logic \textsf{DEL} \cite{VanDitmarsch2007} provides a formal framework
for the representation of knowledge and update of knowledge, and 
several recent approaches to multi-agent planning are based on it, starting with 
\cite{Bolander2011,Lowe2011}. %,Aucher2012}. 
%\del is an extension of standard epistemic logic with semantics based on Kripke models.
%Its dynamics rely on event models which modify Kripke models with an operation called product update. 
%These models allow us to represent actions together with the agents' perception of their occurrence: 
%agents may observe an action entirely, partially, or not at all. 
While \textsf{DEL} provides a very expressive framework, 
it was unfortunately proven to be undecidable even for rather simple fragments of the language \cite{AucherB13,Charrier2016a}. 
Some decidable fragments were studied, most of which focused on public events
\cite{Lowe2011,yu2015dynamic}. 
However, the gossip problem requires private communication. 
We here consider a simple fragment of the language of \textsf{DEL}
where the knowledge operator can only be applied to literals.
Similar approaches to epistemic planning can be found in \cite{Kominis2015,Muise2015}.
\fmnote{
Notes sur DEL ajoutees.
}

%Epistemic planning has previously been modelled in \textsf{DEL} 
%(Dynamic Epistemic Logic)~\cite{VanDitmarsch2007}, a very general framework
%which allows reasoning about the physical world and different agents' knowledge. Unfortunately, the very generality of this model of epistemic planning leads to problems of undecidability~\cite{AucherB13,Charrier2016}.
%Decidable fragments have been studied \cite{Lowe2011}, and a recent attempt \cite{Kominis2015} based on the \textsf{DEL} paradigm focuses on specific event models.
%We here propose a more direct model.

Here we propose a more direct model.
We use the notation $K_i s_j$ to represent the fact that agent $i$
knows the secret of $j$,
the notation $K_i K_j s_k$ to represent the
fact that agent $i$ knows that agent $j$ knows the secret of $k$, etc.
We use the term \emph{positive fluent} for any epistemic proposition
of the form $K_{i_1} \ldots K_{i_r} s_j$. 
If we consider the secrets
$s_i$ as constants and that agents never forget, then positive
fluents, once true, can never become false. A negative fluent
$\neg(K_{i_1} \ldots K_{i_r} s_j)$ can, of course, become false.
%\fmnote{
%We call any suffix of fluent a \emph{sub-fluent}.
%For example, 
%$K_j s_k$ is a sub-fluent of $K_i K_j s_k$.
%}
%
Note that these fluents are not modal formulas of epistemic logic;
$K_i$ is not a modal operator. 
They should simply be viewed as independent propositional variables.
\fmnote{
Deux dernieres phrases ajoutees.
}
%However, every action that we will consider in this paper 
%makes positive fluents true in such a way that 
%standard properties of epistemic logic are obeyed, 
%such as the truth axiom ($K_{i_1} K_{i_2} \ldots K_{i_r} s_j$ implies $K_{i_2} \ldots K_{i_r} s_j$)
%or the introspection axioms ($K_{i_1} \ldots K_{i_r} s_j$ implies $K_{i_1} K_{i_1} \ldots K_{i_r} s_j$)
%\fmnote{are obeyed, such as introspection: all agents always knows whether they know something, i.e., 
%any fluent containing a sequence of the form $K_i K_i$ is always true.}
%.

A \emph{planning problem} consists of an initial state (a set of
fluents $I$), a set of actions and a set of goals (another set of
fluents $Goal$). Each action has a (possibly empty) set of
preconditions (fluents that must be true before the action can be
executed) and a set of effects (positive or negative fluents that
will be true after the execution of the action). A \emph{solution
plan} (or protocol) is a sequence of actions which when applied in this order to
the initial state $I$ produces a state in which all goals in $Goal$
are true. We use the term \emph{epistemic} planning problem when we
need to emphasize that fluents may include the operators $K_i$
($i=1,\ldots,n$). A simple epistemic goal is that all agents know all
the secrets, i.e. $\forall i,j \in \{1,\ldots,n\}$, $K_i s_j$. A
higher-level epistemic goal is $\forall i,j,k \in \{1,\ldots,n\}$,
$K_i K_j s_k$, i.e. that all agents know that all agents know all the
secrets.

The \emph{gossip problem} on $n$ agents and a graph
$G=\tuple{\{1,\ldots,n\},E_G}$  is the epistemic planning problem in
which the actions are \call i j for $\{i,j\} \in E_G$ 
(i.e. there is an edge between $i$ and $j$ in $G$ if and only if they can call each other)
and the
initial state contains $K_i s_i$ for $i=1,\ldots,n$ 
(and implicitly all fluents of the
form $K_{i_1} \ldots K_{i_r} s_j$ with $i_r = j$, together with all fluents of the form
$\neg(K_{i_1} \ldots K_{i_r} s_j)$ with $i_r \neq j$).
The action \call i j has no
preconditions and its effect is that agents $i$ and $j$ share all
their knowledge. We go further and assume that the two agents know
that they have shared all their knowledge, so that, if we had $K_i f$
or $K_j f$ before the execution of \call i j, for any fluent $f$,
then we have $K_{i_1} \ldots K_{i_r} f$ just afterwards, for any $r$
and for any sequence $i_1,\ldots,i_r \in \{i,j\}$.
The assumption
that two agents share all their knowledge when they communicate may
appear unrealistic, but, if the aim is to broadcast information using
the minimum number of calls, this is clearly the best strategy.
Furthermore, in applications in which some secrets must not be
divulged to all agents, it is important to study the worst-case
scenario in which all information is exchanged whenever a
communication occurs.

Let Gossip-pos$_G$($d$) be the gossip problem on a graph $G$ in which
the goal is a conjunction of positive fluents of the form $( K_{i_1} \ldots
K_{i_r} s_j )$ ($1 \leq r \leq d$). Thus, the parameter $d$ specifies
the maximum epistemic depth of goals. We use Gossip$_G$($d$) to
denote the specific problem in which \emph{all} such goals must be
attained. For any fixed $d \geq 1$, Gossip-pos$_G$($d$) can be solved
in polynomial time since it can be coded as a classic STRIPS planning
problem in which actions have no preconditions, and all effects of
actions and all goals are positive \cite{Bylander94}. Indeed, a
necessary and sufficient condition for a solution plan to exist is
that, for all goals $( K_{i_1} \ldots K_{i_r} s_j )$, there is a path
in $G$ from $j$ to $i_1$ passing through $i_r, \ldots, i_2$ (in this
order). Let Gossip-neg$_G$($d$) be the gossip problem in which the
goal is a conjunction of goals of the form $( K_{i_1} \ldots K_{i_r}
s_j )$ ($1 \leq r \leq d$) or $\neg( K_{i_1} \ldots K_{i_r} s_j )$
($1 \leq r \leq d$). We write Gossip-pos($d$), Gossip($d$) and
Gossip-neg($d$) to denote the corresponding problems in which the
graph $G$ is part of the input.
Versions with one-way and parallel communication will be defined in 
sections~\ref{sec:one-way} and~\ref{sec:parallel}. 
In Section~\ref{sec:change} we will define a version where secrets can change truth value.
\fmnote{
Deux dernieres phrases ajoutees.
}

\section{Minimising the number of calls for positive goals}
\label{sec:gossip}

\thicklines \setlength{\unitlength}{1.4pt}
\begin{figure*}[t]
\centering
\begin{picture}(220,60)(0,0)
\put(30,30){\makebox(0,0){$\bullet$}}
\put(10,30){\makebox(0,0){$\bullet$}}
\put(70,30){\makebox(0,0){$\bullet$}}
\put(90,30){\makebox(0,0){$\bullet$}}
\put(130,30){\makebox(0,0){$\bullet$}}
\put(150,30){\makebox(0,0){$\bullet$}}
\put(210,30){\makebox(0,0){$\bullet$}}
\put(190,30){\makebox(0,0){$\bullet$}}
\put(110,10){\makebox(0,0){$\bullet$}}
\put(110,50){\makebox(0,0){$\bullet$}}
\put(54,30){\makebox(0,0){$\cdots$}} \put(166,30){\makebox(0,0){$\cdots$}}
\put(110,10){\line(-1,1){20}}
\put(110,10){\line(1,1){20}}
\put(110,10){\line(-2,1){40}}
\put(110,10){\line(2,1){40}}
\put(110,10){\line(-5,1){100}}
\put(110,10){\line(5,1){100}}
\put(110,10){\line(-4,1){80}}
\put(110,10){\line(4,1){80}}
\put(110,50){\line(-1,-1){20}}
\put(110,50){\line(1,-1){20}}
\put(110,50){\line(-2,-1){40}}
\put(110,50){\line(2,-1){40}}
\put(110,50){\line(-5,-1){100}}
\put(110,50){\line(5,-1){100}}
\put(110,50){\line(-4,-1){80}}
\put(110,50){\line(4,-1){80}}
\put(5,30){\makebox(0,0){$p$}}
\put(215,30){\makebox(0,0){$q$}}
\put(110,56){\makebox(0,0){$1$}}
\put(90,35){\makebox(0,0){$3$}}
\put(130,25){\makebox(0,0){$4$}}
\put(110,4){\makebox(0,0){$2$}}
\end{picture}
\caption{A complete bipartite graph $K_{2,n-2}$.} \label{fig:2-star}
\end{figure*}
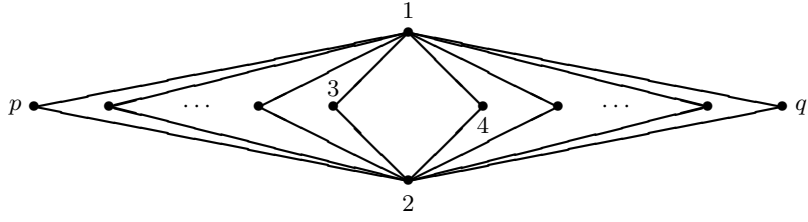

In this section we consider the gossip problem with only positive
goals. The minimal number of calls to obtain the solution of
Gossip$_G$(1) is either $2n - 4$ if the graph $G$ contains a
quadrilateral (a cycle of length 4) as a subgraph, or $2n - 3$ in the
general case \cite{Harary1974}. We first give a simple protocol for
any connected graph before giving protocols requiring many less calls
for special cases of $G$.

\begin{proposition} \label{prop:connected}
If the graph $G$ is connected, then for $n \geq 2$ and $d \geq 1$,
any instance of Gossip-pos$_G$($d$) has a solution of length no
greater than $d(2n-3)$ calls.
\end{proposition}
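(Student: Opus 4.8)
The plan is to take an optimal protocol for the classical depth-$1$ gossip problem and iterate it. By the result recalled above \cite{Harary1974}, a connected graph $G$ on $n \geq 2$ vertices admits a sequence $\sigma$ of at most $2n-3$ calls after which all fluents $K_i s_j$ hold, i.e.\ $\sigma$ solves Gossip$_G$(1). I would output the protocol $\sigma^d$ that runs $\sigma$ exactly $d$ times in succession: it has length at most $d(2n-3)$, so it remains to show that $\sigma^d$ attains every positive fluent of depth at most $d$ --- which is more than any instance of Gossip-pos$_G$($d$) asks for.

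I would prove by induction on $m \geq 1$ the claim that after round $m$ every agent knows every positive fluent of depth at most $m-1$; equivalently, every positive fluent of depth at most $m$ then holds (a fluent of depth $0$ being read as a bare secret $s_j$, which holds vacuously). The base case $m=1$ is precisely the hypothesis that $\sigma$ solves Gossip$_G$(1). For the induction step the workhorse is the following propagation property of $\sigma$: in any state, if agent $c$ knows a fluent $h$, then after running $\sigma$ from that state every agent knows $h$. Indeed, the set of agents knowing $h$ evolves under a call sequence by the same monotone spreading rule as the set of agents knowing the bare secret $s_c$ would, starting from $\{c\}$ --- and $\sigma$ spreads $s_c$ to all agents by assumption. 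The nesting depth grows along the way: at the first call of $\sigma$ in which $c$ takes part (such a call exists because $\sigma$ solves Gossip$_G$(1) and $n\geq 2$), with $h$ already known to $c$, the call semantics make both participants know the fluent $K_c h$, so from that point onwards $K_c h$ is itself spread in the same fashion and is therefore known to every agent at the end of $\sigma$. Applying this with $c$ ranging over all agents and $h$ over all fluents of depth at most $m-1$ --- all known to every agent after round $m$ by the induction hypothesis --- shows that after round $m+1$ every agent knows every fluent $K_c h$, i.e.\ every fluent of depth between $1$ and $m$; combined with the fluents of depth at most $m-1$, which every agent already knew after round $m$ and still knows, this gives that every agent knows every fluent of depth at most $m$, closing the induction. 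Taking $m = d$ proves the proposition.

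The only delicate step is the propagation claim, and within it the growth of the nesting depth. A single call between $c$ and some agent $c'$ produces $K_c K_c h$ and $K_{c'} K_c h$ but not $K_a K_c h$ for a third agent $a$, so one cannot simply assert that $K_c h$ is already known to everyone at the start of round $m+1$. One has to identify the exact moment --- $c$'s first call of that round --- at which $K_c h$ becomes true and known to both of its callers, and then follow its propagation through the remainder of $\sigma$, using that this same first call is where the bare secret $s_c$ would itself begin to spread. The remaining ingredients --- persistence of positive fluents, and $|\sigma| \leq 2n-3$ regardless of whether $G$ contains a quadrilateral --- are routine.
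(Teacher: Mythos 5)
Your proof is correct, but it follows a genuinely different route from the paper's. The paper constructs a concrete protocol: it fixes a spanning tree of $G$ rooted at an adjacent pair $1,2$ and alternates $d$ ``gather'' passes ($n-1$ calls each, ending with a call between $1$ and $2$) with $d$ ``broadcast'' passes ($n-2$ calls each), proving by induction the invariant that after each odd pass agents $1$ and $2$ know all fluents of the current depth and after each even pass all agents do; the two pass lengths sum to the stated $d(2n-3)$. You instead take \emph{any} call sequence $\sigma$ of length at most $2n-3$ solving the classical problem Gossip$_G$(1) (which exists for every connected $G$ by the cited result, and could equally be taken to be one gather-plus-broadcast pass along a spanning tree, making the argument self-contained) and prove a general depth-amplification lemma: one full execution of a sequence that spreads every secret to every agent raises by one the epistemic depth known to all agents. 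Your handling of the one delicate point is sound: a call only gives nested knowledge $K_{c}h$ to its two participants, so you correctly anchor the propagation of $K_c h$ at $c$'s first call of the round and observe that this is exactly where the bare secret $s_c$ would begin to spread, after which the standard monotone-domination argument on ``knowing sets'' finishes the job. What each approach buys: the paper's proof is explicit and exhibits the precise structure of an efficient protocol (a structure it then refines in later propositions), whereas your lemma is modular and slightly more general --- for instance it immediately yields $d(2n-4)$ calls whenever $G$ contains a quadrilateral, since then $|\sigma|\le 2n-4$ --- though pure iteration of a depth-1 protocol cannot by itself reach the sharper bounds of the paper's subsequent results (e.g.\ $1+(d+1)(n-2)$ for Hamiltonian graphs), which require reusing the tail of one pass as the head of the next.
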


\begin{proof}
Since $G$ is connected, it has a spanning tree $\mathcal T$. Let the
root of $\mathcal T$ be $1$. Since $n \geq 2$, there is a node $2$
that is connected to $1$. Let $\mathcal T_2$ be the subtree rooted in
$2$ and let $\mathcal T_1$ be the rest of $\mathcal T$, i.e., $1$
together with all its subtrees except $\mathcal T_2$. Let $|\mathcal
T_i| $ be the number of edges in tree $\mathcal T_i$.

Consider the following protocol, composed of a total of $2d$ passes.
Each pass either consists in calls that go upwards in $\mathcal T$
followed by a call between $1$ and $2$, or consists in calls that go
downward.
\begin{tabbing}
\ \ \ \= odd passes: \ \   \= $|\mathcal T_1|$ calls upwards in
$\mathcal T_1$, starting with the leaves; \\ \> \> $|\mathcal T_2|$
calls upwards in $\mathcal T_2$, starting with the leaves; \\ \> \>
\call 1 2
\\
\> even passes:  \>    $|\mathcal T_1|$ calls downwards in $\mathcal
T_1$, starting with $1$; \\ \> \> $|\mathcal T_2|$ calls downwards in
$\mathcal T_2$, starting with $2$
\end{tabbing}

After $k$ passes:
\begin{itemize}
\item
if $k=2m-1$ then $K_1 K_{i_1} \cdots K_{i_{m-1}} s_j$ and $K_2
K_{i_1} \cdots K_{i_{m-1}} s_j $ are true for all $i_1, \ldots, i_{m-1},
j$;
\item
if $k=2m$ then $K_{i_1} \cdots K_{i_m} s_j$ is true for all $i_1, \ldots, i_m,
j$;
\end{itemize}
So the goal is attained after $2d$ passes. Since the odd passes have
$|\mathcal T_1| + |\mathcal T_2| + 1$ $=$ $|\mathcal T|$ $=n-1$ calls and
the even passes have $n-2$ calls, this gives us a total of $d(2n-3)$
calls.
\end{proof}

In fact, for $d \geq 2$, we require considerably less than $d(2n-3)$ calls if $G$ has a Hamiltonian path.

\begin{proposition} \label{prop:Ham}
If the graph $G$ has a Hamiltonian path, then any instance of
Gossip-pos$_G$($d$) has a solution of length no greater than $1 +
(d+1)(n-2)$.
\end{proposition}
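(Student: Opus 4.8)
The plan is to exploit a Hamiltonian path $p = v_1, v_2, \ldots, v_n = q$ of $G$ and run a sequence of \emph{sweeps} along this path, alternating direction, so that after the first sweep every agent knows every secret, and each subsequent sweep raises the epistemic depth by one. The first sweep consists of the $n-1$ calls $\mathrm{CALL}_{v_1,v_2}$, $\mathrm{CALL}_{v_2,v_3}$, \ldots, $\mathrm{CALL}_{v_{n-1},v_n}$; after it, $v_n$ knows all $n$ secrets, and in fact every agent $v_i$ knows $s_{v_1},\ldots,s_{v_i}$. I would then do a backward sweep $\mathrm{CALL}_{v_{n-1},v_n}$, $\mathrm{CALL}_{v_{n-2},v_{n-1}}$, \ldots, $\mathrm{CALL}_{v_1,v_2}$ of another $n-1$ calls bringing all secrets to everybody (depth $1$ fully attained), and continue alternating. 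The key observation to make precise is a propagation invariant: once a call $\mathrm{CALL}_{i,j}$ occurs, for every fluent $f$ already known to $i$ or $j$, \emph{all} fluents $K_{i_1}\cdots K_{i_r} f$ with $i_1,\ldots,i_r\in\{i,j\}$ become true; so when two ``fully informed up to depth $m$'' agents call, they become fully informed up to depth $m+1$ about each other, and a sweep then spreads this along the path.

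To get the claimed bound $1 + (d+1)(n-2)$ rather than something like $d(n-1)$, the trick is that after the very first full sweep the two \emph{endpoints} $p$ and $q$ already know all secrets, so subsequent sweeps need not traverse the whole path: it suffices to shuttle the ``depth'' between the two ends through the $n-2$ interior edges, or more precisely, each later phase can be realized with $n-2$ calls by not repeating one of the terminal calls. Concretely I would argue that the first sweep ($n-1$ calls) establishes depth~$1$ at $q$ and, after one more call, at both endpoints; then $d-1$ further ``half-sweeps plus return'' phases, each costing $2(n-2)$? — no: I would instead organize it as one initial call of cost $1$, then $d+1$ passes each of cost $n-2$. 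The cleanest accounting: take the Hamiltonian path, do $\mathrm{CALL}_{v_1,v_2}$ (this is the ``$1+$''), and then perform $d+1$ alternating sweeps over the $n-2$ edges $\{v_2,v_3\},\ldots,\{v_{n-1},v_n\}$ together with re-using edge $\{v_1,v_2\}$ only implicitly — working out exactly which $n-2$ calls each pass uses is the bookkeeping one must do carefully so that the depth increments as required and the endpoints stay saturated.

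I would then prove by induction on the pass number $k$ a statement of the form: after the initial call and $k$ passes, there are two adjacent ``anchor'' agents who know all fluents up to depth $k$, and after the following pass every agent knows all fluents up to depth $k$ (analogous to the two-case invariant in Proposition~\ref{prop:connected}, but with the sweep confined to $n-2$ edges because the endpoints are already loaded). The base case uses the structure established by the first genuine sweep; the inductive step is exactly the propagation invariant above applied edge by edge along the sweep. Summing, the total is $1 + (d+1)(n-2)$ calls, and since all goals of Gossip-pos$_G$($d$) have depth at most $d$, they are all attained.

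The main obstacle I anticipate is \emph{not} the propagation logic — that is a routine adaptation of the argument already used for connected graphs — but rather pinning down the precise sweep schedule that achieves $n-2$ (not $n-1$) calls per pass while still correctly incrementing the depth, in particular handling the two ends of the path: one must verify that re-using (or skipping) the terminal edges does not lose a needed depth increment at $v_1$ or $v_n$, and that the parity of $d$ (which end is ``ahead'') is handled so the count comes out to exactly $1+(d+1)(n-2)$ in all cases. A careful statement of the invariant that tracks \emph{which} agent is the current depth-$k$ anchor after each pass is what makes this go through cleanly.
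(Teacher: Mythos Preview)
Your approach is the paper's approach: sweeps along the Hamiltonian path, with an initial phase establishing depth~$1$ and each subsequent sweep of $n-2$ calls raising the depth by one, the saving coming from the fact that the two agents involved in the \emph{last} call of a sweep are already ``one level ahead'' and so the next sweep need not touch the edge between them. You have also correctly located the only genuine difficulty, namely pinning down which $n-2$ edges each pass uses.

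The one concrete slip is your tentative schedule ``do $\mathrm{CALL}_{v_1,v_2}$, then $d{+}1$ alternating sweeps over the fixed $n-2$ edges $\{v_2,v_3\},\ldots,\{v_{n-1},v_n\}$.'' With that \emph{fixed} edge set, $v_1$ is never touched after the initial call and so never learns $s_{v_3},\ldots,s_{v_n}$; the protocol already fails at depth~$1$. The fix---exactly what you gesture at in your last paragraph---is that the omitted terminal edge must \emph{alternate}: upward passes use $\{v_2,v_3\},\ldots,\{v_{n-1},v_n\}$ (omitting $\{v_1,v_2\}$), downward passes use $\{v_{n-2},v_{n-1}\},\ldots,\{v_1,v_2\}$ (omitting $\{v_{n-1},v_n\}$). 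The paper does precisely this, except that it packages the first $2n-3$ calls as a single ``first pass'' (a full sweep of $n-1$ calls followed by a return sweep of $n-2$ calls) rather than as your $1 + 2(n-2)$; the arithmetic $(2n-3)+(d-1)(n-2)=1+(d+1)(n-2)$ and, up to the mirror symmetry $v_i\leftrightarrow v_{n+1-i}$, the actual call sequence coincide. The paper's invariant is that after pass~$m$ all depth-$m$ fluents hold \emph{and} the two agents in the final call of that pass (either $v_1,v_2$ or $v_{n-1},v_n$, according to parity) already satisfy $K_{\text{anchor}}K_{i_1}\cdots K_{i_m}s_j$ for all $i_1,\ldots,i_m,j$---which is why the next pass may skip their shared edge.
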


\begin{proof}
Let $\pi$ be the Hamiltonian path in $G$. Number the vertices of $G$ from 1 to $n$ in the order they are visited
in $\pi$.    %We denote $e_i$ the edge $\{i,i+1\}$.

Consider the following protocol:   % (where $\{u,v\}$ represents the corresponding action CALL$_{uv}$):
\begin{tabbing}
\ \ \ \= first pass: \ \ \ \ \ \ \ \= \call i {i+1} \= (for $i=n-1,\ldots,1$), \\
               \> \> \ \ \ \ then \ \call i {i+1} (for $i=2,\ldots,n-1$) \\
     %$e_{n-1}, \ldots, e_2, e_1, e_2, \ldots, e_{n-1}$ \\
\> second pass: \>  \call i {i+1}  (for $i=n-2,\ldots,1$) \\   %$e_{n-2},$ \= $\ldots, e_2, e_1$ \\
\> third pass: \> \call i {i+1} (for $i=2,\ldots,n-1$)  \\  %$e_2, e_3, \ldots, e_{n-1}$ \\
\> \> \> \ \vdots \\
\> even passes: \> \call i {i+1} (for $i=n-2,\ldots,1$) \\  % $e_{n-2}, \ldots, e_2, e_1$ \\
\> odd passes: \> \call i {i+1} (for $i=2,\ldots,n-1$)  \\   %$e_2, e_3, \ldots, e_{n-1}$ \\
\> \> \> \ \vdots
\end{tabbing}
It is not difficult to see that the first pass establishes $K_i s_j$
for all $i,j$, and indeed it establishes both $K_{n-1} K_i s_j$ and
$K_n K_i s_j$ for all $i,j$ since \call {n-1} n is the last
communication in this pass. By a straightforward induction argument,
we can show that the $m$th pass, for $m$ even, establishes $K_{i_1}
\ldots K_{i_m} s_j$ for all $i_1, \ldots, i_m, j$, and indeed that
the $m$th pass establishes both $K_1 K_{i_1} \ldots K_{i_m} s_j$ and
$K_2 K_{i_1} \ldots K_{i_m} s_j$ for all $i_1, \ldots, i_m, j$ since
\call 1 2 is the last communication in this pass. Similarly, when
$m$ is odd, the $m$th pass establishes $K_{i_1} \ldots K_{i_m} s_j$
for all $i_1, \ldots, i_m, j$, and indeed both $K_{n-1} K_{i_1}
\ldots K_{i_m} s_j$ and $K_n K_{i_1} \ldots K_{i_m} s_j$ for all
$i_1, \ldots, i_m, j$. The above plan then establishes, after $d$
passes, all possible depth-$d$ epistemic goals. The number of CALL
actions in this plan is $2n-3$ in the first pass and $n-2$ in each
subsequent pass, which makes $2n-3 + (d-1)(n-2) \ = \ 1 + (d+1)(n-2)$
in total after $d$ passes.
\end{proof}

The first pass of the protocol given in the proof of
Proposition~\ref{prop:Ham} scans agents from $n$ to $1$ and then from
$1$ to $n$, whereas each subsequent pass consists of a single scan.
We can explain this by the fact that the purpose of the first scan is
to group secrets together; the purpose of the second scan is then to
broadcast this grouped information to all agents. What is surprising
is that we only require one scan for each subsequent increment in the
epistemic depth $d$.

Determining the existence of a Hamiltonian path is known to be
NP-complete~\cite{GareyJ79}. However, this does not necessarily imply
that finding an optimal solution for Gossip-pos($d$) (the problem in
which the graph $G$ is part of the input) is NP-hard. One reason is
that we do not actually need a Hamiltonian path to obtain a plan of
length $1 + (d+1)(n-2)$. In fact, in the protocol given in the proof
of Proposition~\ref{prop:Ham} we can replace the actions
\call i {i+1} ($i=1,\ldots,n-1$) by any sequence \call {j_i} {i+1} 
($i=1,\ldots,n-1$) such that $j_1=1$ and $\forall i=1,\ldots,n-2$,
$\{j_i,i+1\} \cap \{j_{i+1},i+2\} \neq \emptyset$. Another reason why
the existence of a Hamiltonian path is not necessarily critical is
that we can often actually do better. Indeed, the value  $1 +
(d+1)(n-2)$ is not necessarily optimal, since for certain graphs we
can achieve $(d+1)(n-2)$, i.e. one call less.

The graph shown in Figure~\ref{fig:2-star} is the complete bipartite
graph with parts $\{1,2\}$, $\{3,\ldots,n\}$, and is denoted in graph
theory by $K_{2,n-2}$. We now show that there is a protocol which
achieves $(d+1)(n-2)$ calls provided $G$  contains $K_{2,n-2}$ as a
subgraph. This subsumes a previous result which was given only for
the case of a complete graph $G$~\cite{He2015.6}.

\begin{proposition} \label{prop:split}
For $n \geq 4$, if the $n$-vertex graph $G$ has $K_{2,n-2}$ as a
subgraph, then any instance of Gossip-pos$_G$($d$) has a solution of
length no greater than $(d+1)(n-2)$.
\end{proposition}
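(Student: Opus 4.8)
The plan is to transplant the protocol of Proposition~\ref{prop:Ham} onto $K_{2,n-2}$ and to shave off exactly one call by exploiting the quadrilateral that every complete bipartite graph $K_{2,m}$ ($m\ge2$) contains. Write the bipartition of the copy of $K_{2,n-2}$ inside $G$ as $\{1,2\}$ (the \emph{hubs}) and $\{3,\dots,n\}$ (the \emph{spokes}); note that $K_{2,n-2}$ has no Hamiltonian path once $n\ge6$, so Proposition~\ref{prop:Ham} does not apply directly and, in any case, we want one call below its bound of $1+(d+1)(n-2)$. The useful structural features are the $4$-cycle on $\{1,2,3,4\}$ and the spanning tree obtained by joining hub $2$ to every spoke and adding the edge $\{1,3\}$. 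As in Proposition~\ref{prop:Ham}, the protocol will consist of one ``fat'' first pass followed by $d-1$ ``thin'' passes, but the first pass now uses only $2n-4$ calls instead of $2n-3$.

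For the first pass I would use a ``gather then broadcast'' scheme, living entirely on hub--spoke edges, that solves Gossip$_G$($1$) in $2n-4$ calls: funnel all $n$ secrets into hub $1$, routing two of them through hub $2$ and the $4$-cycle on $\{1,2,3,4\}$ (this is, up to relabelling, the classical $2n-4$ gossip protocol), and then broadcast from hub $1$. The point is to arrange the final calls so that, besides establishing $K_i s_j$ for all $i,j$, some pair of vertices $\{a_1,b_1\}$ additionally knows \emph{every} depth-$1$ fluent, i.e.\ satisfies $K_{a_1}K_i s_j$ and $K_{b_1}K_i s_j$ for all $i,j$ --- precisely the post-condition that the $2n-3$-call first pass of Proposition~\ref{prop:Ham} guarantees.

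For each of the remaining $d-1$ passes I would take a scan of $n-2$ calls routed along the spanning tree above, turning around at the vertices most recently ``loaded'' and propagating back through the rest. By the flexibility noted after Proposition~\ref{prop:Ham} (one may replace \call i {i+1} by any \call {j_i} {i+1} satisfying $j_1=1$ and $\{j_i,i+1\}\cap\{j_{i+1},i+2\}\neq\emptyset$), such a scan is admissible in $K_{2,n-2}$. Correctness then follows by induction on the pass number $k$, with invariant: after pass $k$, all fluents of depth $\le k$ are true, and some pair $\{a_k,b_k\}$ additionally knows every fluent of depth $\le k$. Given the invariant for $k-1$, the pair $\{a_{k-1},b_{k-1}\}$ already knows every depth-$(k-1)$ fluent, so the scan of pass $k$ spreads this to all vertices --- hence every depth-$k$ fluent $K_{i_1}\cdots K_{i_k}s_j$ becomes true --- while the endpoints of the scan's last call accumulate enough to play the role of $\{a_k,b_k\}$; the base case $k=1$ is supplied by the first pass. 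Thus after $d$ passes every depth-$d$ goal holds, and the total number of calls is $(2n-4)+(d-1)(n-2)=(d+1)(n-2)$.

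The main obstacle is the design and verification of the $2n-4$-call first pass: because $K_{2,n-2}$ has only two hubs (unlike a Hamiltonian path), the ``scan and reverse'' idea of Proposition~\ref{prop:Ham} has to be fused with the quadrilateral trick, and one must check --- carefully tracking the nested-$K$ fluents produced by the ``two agents share all their knowledge'' semantics --- that this hybrid pass really does leave a suitable pair $\{a_1,b_1\}$ knowing every depth-$1$ fluent, so that the thin passes can run as in Proposition~\ref{prop:Ham}. A secondary point is to confirm that the admissible replacements \call {j_i}{i+1} let the thin-pass scans reach every vertex of $K_{2,n-2}$ in both directions of traversal.
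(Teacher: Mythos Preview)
Your approach has a real gap at the first pass. You require that after $2n-4$ calls on $K_{2,n-2}$ some pair $\{a_1,b_1\}$ satisfies $K_{a_1}K_is_j$ \emph{and} $K_{b_1}K_is_j$ for all $i,j$, i.e.\ each of $a_1,b_1$ individually knows every depth-$1$ fluent. This is not achievable. Already for $n=4$ on $K_{2,2}$, the only $4$-call sequences establishing $T_2$ are, up to relabelling, the quadrilateral \call 1 3, \call 2 4, \call 1 4, \call 2 3; after it agent~$1$ knows $K_1,K_4$ of everything but only $K_2\{s_2,s_4\}$ and $K_3\{s_1,s_3\}$, and symmetrically for the others, so no single agent (hence no pair in your sense) knows all depth-$1$ fluents. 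The obstruction is structural: with exactly $2n-4$ calls the gather and broadcast halves are just long enough to make $T_2$ true, but each hub witnesses only the depth-$1$ fluents created along its own broadcast chain, never those created by the other hub --- and the hubs cannot talk directly in $K_{2,n-2}$. The post-condition you need is precisely the one that costs the extra call in Proposition~\ref{prop:Ham}; the quadrilateral saves that call for $T_2$ but not for your stronger invariant. Your ``main obstacle'' is therefore not merely a verification to be carried out: the claimed invariant is false.

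The paper avoids this by abandoning the ``fat first pass $+$ thin passes'' decomposition altogether. It uses $d{+}1$ \emph{uniform} passes of $n-2$ calls each: split the spokes into two nonempty sets $L,R$ with $p=\max L$, $q=\max R$; odd passes run hub~$1$ up through $L$ then hub~$2$ up through $R$, even passes run hub~$1$ down through $R$ then hub~$2$ down through $L$. Crucially the invariant is \emph{disjunctive}: after pass $m$ (odd), every depth-$(m{-}1)$ fluent $f$ satisfies $(K_1f\vee K_qf)\wedge(K_2f\vee K_pf)$. The first call of pass $m{+}1$ then collapses a disjunct (e.g.\ \call 1 q turns $K_1f\vee K_qf$ into $K_1f$), after which the pass distributes as before. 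In particular the first pass does \emph{not} solve Gossip$_G(1)$ --- it need only set up this split knowledge --- and at no point does any single agent or adjacent pair have to hold all depth-$(m{-}1)$ fluents. That weakening of the invariant is exactly what lets every pass, including the first, cost only $n-2$ calls.
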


\begin{proof}
Suppose that the two parts of $K_{2,n-2}$ are $\{1,2\}$,
$\{3,\ldots,n\}$. We choose an arbitrary partition of the vertices
$3,\ldots,n$ into two non-empty sets $L$, $R$. We can number the
vertices so that $\min(L)=3$ and $\min(R)=4$. Denote $\max(L)$ by $p$
and $\max(R)$ by $q$ (as shown in Figure~\ref{fig:2-star}).

Consider the following protocol:
%(where $uv$ represents the corresponding
%action CALL$_{uv}$):
\begin{tabbing}
\ \ \ \= odd passes: \ \ \ \ \ \ \ \= \call 1 3  \ $\ldots$ \ \call 1 p \ \ \ \= \call 2 4  \ $\ldots$ \ \call 2 q \\
\> even passes: \> \call 1 q \ $\ldots$  \ \call 1 4 \>  \call 2 p \ $\ldots$  \ \call 2 3
\end{tabbing}
In other words: the odd passes are composed of \call 1 x for each
$x \in L$ in increasing order of $x$, followed by \call 2 y for
each $y \in R$ in increasing order of $y$; and the even passes are
composed of \call 1 y for each $y \in R$ in decreasing order of
$y$, followed by \call 2 x for each $x \in L$ in decreasing order
of $x$. The length of this plan after $d+1$ passes is $(d+1)(|L| +
|R|)$ $=$ $(d+1)(n-2)$. It therefore only remains to show that
$(d+1)$ passes are sufficient to establish all possible depth-$d$
epistemic goals. A positive epistemic fluent of the form $K_{i_1}
\ldots K_{i_d} s_j$, for agents $i_1,\ldots,i_d,j$, has depth $d$. In
particular, $s_j$ has depth 0.

For $m \geq 1$, let $H_m$ be the hypothesis that after $m$ passes,
for all depth $m-1$ positive epistemic fluents $f$, we have
\begin{eqnarray*}
(K_1 f \vee K_q f) & \wedge & (K_2 f \vee K_p f) \ \ \ \ {\rm if} \ m \ {\rm is} \ {\rm odd} \\
(K_1 f \vee K_3 f) & \wedge & (K_2 f \vee K_4 f) \ \ \ \ {\rm if} \ m \ {\rm is} \ {\rm even}
\end{eqnarray*}
It is not difficult to see that $H_1$ is true after the first pass.
For $H_m \Rightarrow H_{m+1}$,
suppose $m$ is even.
By $H_m$, after pass $m$, 
we have $K_1 f \vee K_3 f$ for all positive fluents $f$ of depth $m{-}1$.
Thus the first call of pass $m{+}1$,
\call 1 3, makes $1$ and $3$ know all fluents of depth $m{-}1$.
After \call 1 p, $1$ and $p$ know that $1$, $3$, $\ldots$, $p$ know all fluents of depth $m{-}1$.
The same goes for $2$:
since we have $K_2 f \vee K_4 f$ by $H_m$, 
after \call 2 4, $2$ and $4$ know all fluents of depth $m{-}1$.
At the end of pass $m{+}1$
(after \call 2 q), 
$2$ and $q$ know that $2$, $4$, $\ldots$, $q$ know all fluents of depth $m{-}1$.
Thus for any fluent $f$ of depth $m$, 
either $1$ knows $f$ or $q$ knows $f$, and either $2$ knows $f$ or $p$ knows $f$, 
that is, $H_{m+1}$.
The reasoning is similar for $m$ odd.
\fmnote{
Quelques details ajoutes.
}
The above plan therefore establishes, after $d+1$ passes, all possible depth-$d$ epistemic goals.
\end{proof}

Observe that the complete graph on $n \geq 4$ vertices has
$K_{2,n-2}$ as a subgraph. Furthermore, detecting whether an
arbitrary graph $G$ has $K_{2,n-2}$ as a subgraph can clearly be
achieved in polynomial time, since it suffices to test for each pair
of vertices $\{i,j\}$ whether or not $G$ contains all edges of the
form $\{u,v\}$ ($u \in \{i,j\}$, $v \in \{1,\ldots,n\} \setminus
\{i,j\}$).

Recall that Gossip$_G$($d$) denotes the version of
Gossip-pos$_G$($d$) in which the goal consists of all depth-$d$
positive epistemic fluents. We can, in fact, show that the solution
plan given in the proof of Proposition~\ref{prop:split} is optimal
for Gossip$_G$($d$).

\begin{theorem} \label{thm:minpos}
The number of calls required to solve Gossip$_G$($d$) (for any graph
$G$) is at least $(d+1)(n-2)$.
\end{theorem}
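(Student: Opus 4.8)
The plan is to induct on $d$, peeling off the ``last call that is indispensable for the depth-$(d{-}1)$ goals'' and showing that completing the depth-$d$ goals costs at least $n-2$ more calls than completing the depth-$(d{-}1)$ goals. We may assume $G$ is connected (otherwise no solution exists) and $n\ge 2$ (otherwise the bound is trivial).

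For the base case $d=1$, Gossip$_G$(1) is exactly the classical gossip problem, which requires at least $2n-4=2(n-2)$ calls on any $n$-vertex graph \cite{Harary1974}; for $n\le 3$ the quantity $2(n-2)$ is no larger than $n-1$, the number of calls forced merely by connectivity, so nothing has to be proved there. Now let $d\ge 2$, assume the bound for $d-1$, and let $P=(c_1,\dots,c_N)$ be any solution of Gossip$_G$($d$). Let $t^\ast$ be the least integer such that the prefix $(c_1,\dots,c_{t^\ast})$ already makes every positive fluent of depth at most $d-1$ true (this is well defined, and at least $1$ since $K_1 s_2$ is false initially). Since that prefix is a solution of Gossip$_G$($d-1$), the induction hypothesis gives $t^\ast\ge d(n-2)$, so it remains to prove $N-t^\ast\ge n-2$.

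By minimality of $t^\ast$, some positive fluent $F=K_{i_1}\cdots K_{i_r}s_j$ with $1\le r\le d-1$ is false just before $c_{t^\ast}$ but true just after it. Write $F=K_{i_1}g$. The key observations are: (i) the truth value of $F$ can change only in a call involving $i_1$, so $c_{t^\ast}=\{i_1,b\}$ for some agent $b$ who knows $g$ beforehand; (ii) since positive fluents are monotone, no agent satisfies $K_\ell F$ before $c_{t^\ast}$ (for any $\ell$), whereas the CALL convention that the two callers share all their iterated knowledge forces both $i_1$ and $b$ to satisfy $K_{i_1}F$ and $K_b F$ immediately afterwards --- so, writing $S_t$ for the set of agents satisfying $K_\ell F$ after $c_t$, we have $|S_{t^\ast}|=2$; (iii) for each later call $c_t=\{u,v\}$, if exactly one of $u,v$ lies in $S_{t-1}$ then the other joins $S_t$ (again by the sharing convention), and otherwise $S_t=S_{t-1}$, because an agent cannot come to satisfy $K_\ell F$ without a call to someone already in $S$; hence $|S_t|\le|S_{t-1}|+1$; (iv) $K_i F$ is a positive fluent of depth $r+1\le d$, hence one of the goals of Gossip$_G$($d$), so $S_N=\{1,\dots,n\}$. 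Combining (ii)--(iv), at least $n-2$ of the calls $c_{t^\ast+1},\dots,c_N$ strictly enlarge $S$, so $N-t^\ast\ge n-2$ and therefore $N\ge d(n-2)+(n-2)=(d+1)(n-2)$.

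I expect the only real work to lie in pinning down observations (i)--(iii): one must check that, once the iterated-knowledge effects of a CALL action are taken into account, the ``newly completed'' depth-$(\le d{-}1)$ fact $F$ really behaves like a fresh secret --- born at $c_{t^\ast}$ in the hands of precisely two agents and thereafter diffusing to at most one new agent per call --- which uses the monotonicity of positive fluents together with a careful reading of the initial state and the CALL effects. Everything else is just adding the inequality $t^\ast\ge d(n-2)$ (supplied by the induction hypothesis) to $N-t^\ast\ge n-2$, with the classical $2(n-2)$ bound to anchor the induction.
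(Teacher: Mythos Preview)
Your argument is correct and follows essentially the same route as the paper's: induct on the epistemic depth, anchor at $d=1$ with the classical $2n-4$ bound, identify the last call $c_{t^\ast}$ needed for the depth-$(d-1)$ goals, observe that after it at most two agents possess the relevant depth-$d$ knowledge, and count $n-2$ further calls to spread it. The paper phrases the inductive step in terms of the conjunction $T_{r+1}$ (``only $i$ and $j$ know $T_{r+1}$ after the critical call, so distributing it costs $n-2$ calls''), whereas you track a single freshly-established fluent $F$ and the set $S_t=\{\ell:K_\ell F\}$; your formulation makes the ``at most one new knower per call'' step cleaner (in particular your remark that $i_1\in S_t$ for all $t\ge t^\ast$ rules out the awkward case where two non-knowers call each other and both happen to learn $F$), but the skeleton is identical.
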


\begin{proof}
\fmnote{
Preuve un peu modifiee.
}
Consider any solution plan for Gossip$_G$($d$). The goal of
Gossip$_G$($d$) is to establish $T_{d+1}$ (where $T_r$ is the
conjunction of $K_{i_1} \ldots K_{i_{r-1}} s_{i_r}$ for all
$i_1,\ldots,i_{r} \in \{1,\ldots,n\}$). 

We give a proof by induction.
Suppose that at least $(r+1)(n-2)$ calls are required to establish
$T_{r+1}$. This is true for $r=1$ because it takes at least a
sequence of $2n-4$ calls to establish $T_2$ (each agent knows the
secret of each other agent)~\cite{Baker1972,Hajnal1972,Tijdeman1971}.
For general $r$ and without loss of generality, suppose that before
the last call to establish it, $T_{r+1}$ was false because of lack of
knowledge of agent $j$ (i.e. $K_j T_{r}$ was false). 
By induction hypothesis this is at least the $((r{+}1)(n{-}2){-}1)$-th call.
This call involves $j$ and another agent, say $i$, and establishes
not only $T_{r+1}$, but also $K_j T_{r+1}$ and $K_i T_{r+1}$. 
However, $\neg K_k T_{r+1}$ holds both before and after this call,
for the agents $k$ distinct from $i$ and $j$. 
To establish $T_{r+2}$, it is necessary to distribute $T_{r+1}$
from $i$ and $j$ to other agents and this takes at least $n-2$ calls.
Hence, at least $(r+2)(n-2)$ calls are required in total to establish
$T_{r+2}$. By induction on $r$, it takes at least a sequence of
$(d{+}1)(n{-}2)$ calls to establish $T_{d+1}$.
\end{proof}

\section{One-way communications}  \label{sec:one-way}

We can consider a different version of the gossip problem, which we
denote by Directional-gossip, in which communications are one-way.
Whereas a telephone call is essentially a two-way communication,
e-mails and letters are essentially one-way. We now consider the case
in which the result of \call i j is that agent $i$ shares all his
knowledge with agent $j$ but agent $i$ receives no information from
agent $j$. Indeed, to be consistent with communication by e-mail, in
which the sender cannot be certain when (or even if) an e-mail will
be read by the receiver, we assume that after \call i j, agent $i$
does not even gain the knowledge that agent $j$ knows the information
that agent $i$ has just sent in this call.

Clearly, Directional-gossip-pos$_G$($d$) can be solved in polynomial
time, since any solution  plan for Gossip-pos$_G$($d$) can be
converted into a solution plan for Directional-gossip-pos$_G$($d$) by
replacing each two-way call by two one-way calls. What is surprising
is that the exact minimum number of calls to solve
Directional-gossip-pos$_G$($d$) is often much smaller than this and
indeed often very close to the minimum number of calls required to
solve Gossip-pos$_G$($d$). We consider, in particular, the hardest
version of Directional-gossip-pos$_G$($d$), in which the aim is to
establish all epistemic goals of depth $d$. Let $T_r$ be the
conjunction of $K_{i_1} \ldots K_{i_{r-1}} s_{i_r}$ for all
$i_1,\ldots,i_{r} \in \{1,\ldots,n\}$, and let
Directional-gossip$_G$($d$) denote the directional gossip problem
whose goal is to establish $T_{d+1}$.

In the directional version, the graph of possible communications is
now a directed graph $G$. Let $\overline{G}$ be the graph with the
same $n$ vertices as the directed graph $G$ but with an edge between
$i$ and $j$ if and only if $G$ contains the two directed edges
$(i,j)$ and $(j,i)$. It is known that if the directed graph $G$ is
strongly connected, the minimal number of calls for
Directional-gossip-pos$_G$(1) is $2n - 2$ \cite{Harary1974}. We now
generalise this to arbitrary $d$ under an assumption about the graph
$\overline{G}$.

\begin{proposition}  \label{thm:dirpos}
For all $d \geq 1$, if $\overline{G}$ contains a Hamiltonian path,
then any instance of Directional-gossip-pos$_G$($d$) has a solution
of length no greater than $(d+1)(n-1)$.
\end{proposition}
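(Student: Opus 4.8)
The plan is to exhibit an explicit protocol of exactly $(d+1)(n-1)$ one-way calls and then to verify by induction that it makes every positive fluent of depth at most $d$ true. Since the goal of any instance of Directional-gossip-pos$_G$($d$) is a conjunction of positive fluents of depth at most $d$, this will suffice (and in particular it establishes $T_{d+1}$, the goal of Directional-gossip$_G$($d$)).

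First I would fix a Hamiltonian path of $\overline{G}$ and relabel the agents $1,\ldots,n$ in the order in which they occur on it, so that for every $i$ both directed edges $(i,i{+}1)$ and $(i{+}1,i)$ are present in $G$. The protocol consists of $d+1$ \emph{passes}: an odd pass is the ``up-scan'', the sequence of $n-1$ one-way calls $1\to 2\to\cdots\to n$, and an even pass is the ``down-scan'' $n\to n{-}1\to\cdots\to 1$. The key structural remark is that each pass starts at the agent where the previous pass finished; the total number of calls is $(d+1)(n-1)$.

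The heart of the argument is the invariant $J_m$: after pass $m$, (i) every fluent of depth $m{-}1$ is true, and (ii) the agent $z_m$ at which pass $m$ ends ($z_m=n$ for $m$ odd, $z_m=1$ for $m$ even) knows every fluent of depth $m{-}1$. The base case $J_1$ is immediate, since after the first up-scan agent $n$ has collected all the secrets. For the step $J_{m-1}\Rightarrow J_m$ I would trace pass $m$ call by call. It begins at $z_{m-1}$, which by part~(ii) of $J_{m-1}$ already knows all fluents of depth $m{-}2$; each call of the scan forwards this knowledge, so after the pass every agent knows all depth-$(m{-}2)$ fluents, and therefore every depth-$(m{-}1)$ fluent is true, which is~(i). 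For~(ii) I would use the (previously stated) effect of a one-way call $k\to\ell$: the receiver $\ell$ comes to know every fluent the sender $k$ knows, to know that $k$ knows it, and, by introspection after the call, to know that $\ell$ itself now knows it. A short sub-induction along the scan then shows that the $t$-th agent visited by pass $m$ knows every depth-$(m{-}1)$ fluent whose outermost operator is one of the first $t$ agents visited; applied to the last agent, namely $z_m$, this gives~(ii). Taking $m=d+1$ shows that all fluents of depth at most $d$ are true.

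The step I expect to be delicate is part~(ii) of the invariant: one must keep straight what propagates all the way along a one-way scan — knowledge of the lower-depth fluents, which reaches every agent and yields~(i) — versus what only accumulates at the leading end of the scan — the extra epistemic layer, which only the far endpoint $z_m$ ends up holding. This hinges on the precise reading of a one-way call, so I would pin that reading down at the very start of the proof, as the natural one-way analogue of the two-way convention of Section~\ref{sec:epiPlan}: the sender gains nothing, while the receiver gains the sender's knowledge together with the fact that the sender has it, with the receiver's own perspective always on the outside.
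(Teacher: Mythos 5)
Your proposal is correct and is essentially the paper's own proof: the same alternating up/down Hamiltonian scans over $d+1$ passes totalling $(d+1)(n-1)$ one-way calls, with your invariant $J_m$ matching the paper's hypothesis H($m$) (the agent at which pass $m$ ends knows all depth-$(m-1)$ fluents, truth of those fluents following from that knowledge). Your explicit sub-induction along a scan and your pinning-down of the one-way call semantics merely spell out details the paper leaves implicit.
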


\begin{proof}
We give a protocol which establishes all positive goals of epistemic
depth up to $d$. Without loss of generality, suppose that the
Hamiltonian path in $\overline{G}$ is $1, 2,
\ldots, n$. 
\fmnote{
D'accord pour le graphe, je n'avais pas bien lu.
J'ai remplac� $1 \rightarrow 2 \rightarrow ... \rightarrow n$ par $1, 2, ..., n$ 
comme l'a sugg�r� Martin car il me semble que c'est plus intuitif.
}
Consider the plan consisting of $d+1$ passes
according to the following protocol:
\begin{tabbing}
odd passes \ \ \ \ \ \ \ \ \= \call i {i+1} (for $i=1,\ldots,n-1$) \\
even passes \> \call {i+1} i (for $i=n-1,\ldots,1$)
\end{tabbing}
We show by a simple inductive proof that this protocol is correct for
any $d \geq 1$. Recall that $T_r$ is the conjunction of $K_{i_1}
\ldots K_{i_{r-1}} s_{i_r}$ for all $i_1,\ldots,i_{r} \in
\{1,\ldots,n\}$. Consider the hypothesis \ H($r$): at the end of pass
$r$, if $r$ is odd we have $K_n T_{r}$ and if $r$ is even we have
$K_1 T_{r}$. Clearly, H(1) is true since at the end of the first pass
agent $n$ knows all the secrets $s_i$ ($i=1,\ldots,n$). If $r$ is odd
and H($r$) holds, then at the end of pass $r+1$, all agents know
$T_r$ and furthermore agent 1 knows this (i.e. $K_1 T_{r+1}$). A
similar argument shows that $H(r) \Rightarrow H(r+1)$ when $r$ is
even. By induction, H($r$) holds for all $r=1,\ldots,d+1$. For $K_n
T_{r}$ or $K_1 T_{r}$ to hold, we must have $T_r$. Thus after $d+1$
passes, and $(d+1)(n-1)$ calls, we have the goal $T_{d+1}$.
\end{proof}

However, as pointed out in Section~\ref{sec:gossip}, determining the
existence of a Hamiltonian path in a graph is
NP-complete~\cite{GareyJ79}.

We now show that the solution plan given in the proof of
Proposition~\ref{thm:dirpos} is optimal even for a complete digraph
$G$.

\begin{theorem} \label{thm:mindirpos}
The number of calls required to solve Directional-gossip$_G$($d$)
(for any digraph $G$) is at least $(d+1)(n-1)$.
\end{theorem}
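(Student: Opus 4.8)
The plan is to follow the same inductive skeleton as the proof of Theorem~\ref{thm:minpos}, but exploiting the fact that a one-way call is ``weaker'' than a two-way one. Concretely, I would prove by induction on $r\ge 1$ that establishing $T_{r+1}$ requires at least $(r+1)(n-1)$ calls, and then specialise to $r=d$, since the goal of Directional-gossip$_G$($d$) is $T_{d+1}$. We may assume the instance is solvable, for otherwise the bound holds vacuously; note that solvability already forces $G$ to be strongly connected, because every agent must learn every secret along directed edges. The base case $r=1$ asks for $T_2$ (every agent knows every secret), and on a strongly connected digraph this needs at least $2n-2=2(n-1)$ calls, by the result of \cite{Harary1974} cited above.

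For the inductive step, take any plan establishing $T_{r+2}$; since $T_{r+2}\Rightarrow T_{r+1}$ and $T_{r+1}$ is false initially, there is a first call $c$ after which $T_{r+1}$ holds, and by the induction hypothesis $c\ge(r+1)(n-1)$ (the prefix of length $c$ already establishes $T_{r+1}$). Writing $T_{r+1}$ as the conjunction of $K_k T_r$ over all agents $k$, and observing that a one-way call \call x y alters only the knowledge of the receiver $y$, the call $c$ can turn at most one false conjunct true, so immediately before $c$ exactly one conjunct $K_b T_r$ is false and $c$ must have $b$ as receiver and some sender $a$ with $K_a T_r$ true. The key point --- and this is where directionality buys the extra $n-1$ over Theorem~\ref{thm:minpos}'s $n-2$ --- is that immediately before $c$ no agent knows $T_{r+1}$ (an agent never knows a false fact, and $T_{r+1}$ is false), so immediately after $c$ at most the single receiver $b$ knows it; and thereafter every one-way call can enlarge the set of agents who know $T_{r+1}$ by at most one (only the receiver changes), in contrast to a two-way call, where the two participants may pool complementary partial knowledge. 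Since $T_{r+2}$ demands that all $n$ agents know $T_{r+1}$, at least $n-1$ further calls are needed, so the plan has length at least $c+(n-1)\ge(r+1)(n-1)+(n-1)=(r+2)(n-1)$. Taking $r=d$ gives the theorem.

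I expect the only delicate points to be the bookkeeping around the call $c$: arguing cleanly that exactly one conjunct $K_b T_r$ is false just before $c$ (hence $c$'s receiver is $b$), that just before $c$ no agent can know $T_{r+1}$ (which rests on the easy but needed fact that in this model no agent ever holds a false positive fluent --- an induction on the plan), and that just after $c$ the set of agents knowing $T_{r+1}$ has size at most one. Once these are in place, the ``at most $+1$ per one-way call'' monotonicity and the outer induction are routine; and one should also state explicitly that the degenerate, unsolvable case is covered vacuously.
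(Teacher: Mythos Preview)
Your proof is correct and uses essentially the same ideas as the paper's: that a one-way call changes only the receiver's knowledge, that no agent ever knows a false proposition, and hence that immediately after $T_{r+1}$ first becomes true at most one agent can know it, forcing at least $n-1$ further calls before $T_{r+2}$ holds. The paper organizes these same observations as a cycle of three interlocking claims (C1($r$): after $r(n-1)-1$ calls no agent knows $T_r$; C2($r$): after $r(n-1)$ calls at most one does; C3($r$): at least $(r+1)(n-1)$ calls are needed for $T_{r+1}$), deriving the base case C1(1) directly from a counting argument rather than citing the $2n-2$ lower bound of \cite{Harary1974}; but these are purely structural differences, and your single-claim induction in the style of Theorem~\ref{thm:minpos} is equivalent.
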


\begin{proof}
Consider any solution plan for Directional-gossip$_G$($d$). The goal
of Directional-gossip$_G$($d$) is to establish $T_{d+1}$ (the
conjunction of $K_{i_1} \ldots K_{i_{d}} s_{i_{d+1}}$ for all
$i_1,\ldots,i_{d+1} \in \{1,\ldots,n\}$). Consider the following claims
(for $1 \leq r \leq d$):
\begin{description}
\item[C1($r$)] \ after $r(n-1)-1$ calls no agent knows $T_r$
\item[C2($r$)] \  after $r(n-1)$ calls at most one agent knows $T_r$.
\item[C3($r$)] \  at least  $(r+1)(n-1)$ calls are required to establish $T_{r+1}$
\end{description}
C1($1$) is true because $T_1$ is the conjunction of all the secrets $s_j$ and no agent $i$ can know
all the secrets after only $n-2$ calls since after $n-2$ calls, there is necessarily
some agent $i' \neq i$ who has not communicated his secret to anyone.
Let $r \in \{1,\ldots,d\}$.
We will show C1($r$) $\Rightarrow$ C2($r$) $\Rightarrow$ C3($r$) $\Rightarrow$ C1($r+1$).

\paragraph {\bf C1($r$) $\Rightarrow$ C2($r$):} Straightforward, since during one call only one agent gains knowledge.

\paragraph {\bf C2($r$) $\Rightarrow$ C3($r$):} Suppose that C2($r$) holds, i.e.
after $r(n-1)$ calls at most one agent knows $T_r$. This means that the
other $n-1$ agents require some information in order to know $T_r$. Hence we require at least
$n-1$ other calls, i.e. $(r+1)(n-1)$ calls in total, to establish $T_{r+1}$.

\paragraph {\bf C3($r$) $\Rightarrow$ C1($r+1$):}
Suppose C3($r$) is true and C1($r+1$) is false. Then
we require at least  $(r+1)(n-1)$ calls to establish $T_{r+1}$ but after $(r+1)(n-1)-1$ calls some agent $i$
knows $T_{r+1}$. There is clearly a contradiction since agent $i$ cannot know something which is false. \\

\noindent  This completes the proof by induction  that at least $(d+1)(n-1)$
calls are required to establish $T_{r+1}$, since this corresponds exactly to C3($d$).
\end{proof}

It is worth pointing out that, by Theorem~\ref{thm:minpos},
the optimal number of 2-way calls is only $d+1$ less than the optimal
number of one-way calls and is hence independent of $n$, the number
of agents.

\section{Parallel communications}  \label{sec:parallel}

\begin{figure*}[t]
\centering

\begin{picture}(280,105)(0,10)

\put(0,0){
\begin{picture}(70,100)(0,15)
\put(30,100){\dashline{2.5}(0,0)(0,15)}
\put(15,110){\makebox(0,0){\boldmath{$V_1$}}}
\put(45,110){\makebox(0,0){\boldmath{$V_2$}}}
\put(20,30){\makebox(0,0){$\bullet$}}
\put(20,40){\makebox(0,0){$\bullet$}}
\put(20,50){\makebox(0,0){$\bullet$}}
\put(20,60){\makebox(0,0){$\bullet$}}
\put(20,70){\makebox(0,0){$\bullet$}}
\put(20,80){\makebox(0,0){$\bullet$}}
\put(20,90){\makebox(0,0){$\bullet$}}
\put(40,30){\makebox(0,0){$\bullet$}}
\put(40,40){\makebox(0,0){$\bullet$}}
\put(40,50){\makebox(0,0){$\bullet$}}
\put(40,60){\makebox(0,0){$\bullet$}}
\put(40,70){\makebox(0,0){$\bullet$}}
\put(40,80){\makebox(0,0){$\bullet$}}
\put(40,90){\makebox(0,0){$\bullet$}} \put(20,30){\line(1,0){20}}
\put(20,40){\line(1,0){20}} \put(20,50){\line(1,0){20}}
\put(20,60){\line(1,0){20}} \put(20,70){\line(1,0){20}}
\put(20,80){\line(1,0){20}} \put(20,90){\line(1,0){20}}
\put(15,30){\makebox(0,0){$\overline{13}$}}
\put(15,40){\makebox(0,0){$\overline{11}$}}
\put(15,50){\makebox(0,0){$\overline{9}$}}
\put(15,60){\makebox(0,0){$\overline{7}$}}
\put(15,70){\makebox(0,0){$\overline{5}$}}
\put(15,80){\makebox(0,0){$\overline{3}$}}
\put(15,90){\makebox(0,0){$\overline{1}$}}
\put(45,30){\makebox(0,0){$\overline{14}$}}
\put(45,40){\makebox(0,0){$\overline{12}$}}
\put(45,50){\makebox(0,0){$\overline{10}$}}
\put(45,60){\makebox(0,0){$\overline{8}$}}
\put(45,70){\makebox(0,0){$\overline{6}$}}
\put(45,80){\makebox(0,0){$\overline{4}$}}
\put(45,90){\makebox(0,0){$\overline{2}$}}
\end{picture}
}

\put(70,0){
\begin{picture}(70,100)(0,15)
\put(30,100){\dashline{2.5}(0,0)(0,15)}
\put(15,110){\makebox(0,0){\boldmath{$V_1$}}}
\put(45,110){\makebox(0,0){\boldmath{$V_2$}}}
\put(20,30){\makebox(0,0){$\bullet$}}
\put(20,40){\makebox(0,0){$\bullet$}}
\put(20,50){\makebox(0,0){$\bullet$}}
\put(20,60){\makebox(0,0){$\bullet$}}
\put(20,70){\makebox(0,0){$\bullet$}}
\put(20,80){\makebox(0,0){$\bullet$}}
\put(20,90){\makebox(0,0){$\bullet$}}
\put(40,30){\makebox(0,0){$\bullet$}}
\put(40,40){\makebox(0,0){$\bullet$}}
\put(40,50){\makebox(0,0){$\bullet$}}
\put(40,60){\makebox(0,0){$\bullet$}}
\put(40,70){\makebox(0,0){$\bullet$}}
\put(40,80){\makebox(0,0){$\bullet$}}
\put(40,90){\makebox(0,0){$\bullet$}} \put(20,30){\line(1,3){20}}
\put(20,40){\line(2,-1){20}} \put(20,50){\line(2,-1){20}}
\put(20,60){\line(2,-1){20}} \put(20,70){\line(2,-1){20}}
\put(20,80){\line(2,-1){20}} \put(20,90){\line(2,-1){20}}
\put(15,30){\makebox(0,0){$\overline{13}$}}
\put(15,40){\makebox(0,0){$\overline{11}$}}
\put(15,50){\makebox(0,0){$\overline{9}$}}
\put(15,60){\makebox(0,0){$\overline{7}$}}
\put(15,70){\makebox(0,0){$\overline{5}$}}
\put(15,80){\makebox(0,0){$\overline{3}$}}
\put(15,90){\makebox(0,0){$\overline{1}$}}
\put(45,30){\makebox(0,0){$\overline{14}$}}
\put(45,40){\makebox(0,0){$\overline{12}$}}
\put(45,50){\makebox(0,0){$\overline{10}$}}
\put(45,60){\makebox(0,0){$\overline{8}$}}
\put(45,70){\makebox(0,0){$\overline{6}$}}
\put(45,80){\makebox(0,0){$\overline{4}$}}
\put(45,90){\makebox(0,0){$\overline{2}$}}
\end{picture}
}

\put(140,0){
\begin{picture}(70,100)(0,15)
\put(30,100){\dashline{2.5}(0,0)(0,15)}
\put(15,110){\makebox(0,0){\boldmath{$V_1$}}}
\put(45,110){\makebox(0,0){\boldmath{$V_2$}}}
\put(20,30){\makebox(0,0){$\bullet$}}
\put(20,40){\makebox(0,0){$\bullet$}}
\put(20,50){\makebox(0,0){$\bullet$}}
\put(20,60){\makebox(0,0){$\bullet$}}
\put(20,70){\makebox(0,0){$\bullet$}}
\put(20,80){\makebox(0,0){$\bullet$}}
\put(20,90){\makebox(0,0){$\bullet$}}
\put(40,30){\makebox(0,0){$\bullet$}}
\put(40,40){\makebox(0,0){$\bullet$}}
\put(40,50){\makebox(0,0){$\bullet$}}
\put(40,60){\makebox(0,0){$\bullet$}}
\put(40,70){\makebox(0,0){$\bullet$}}
\put(40,80){\makebox(0,0){$\bullet$}}
\put(40,90){\makebox(0,0){$\bullet$}}
%step1
\put(20,30){\line(1,2){20}} \put(20,40){\line(1,2){20}}
\put(20,50){\line(1,2){20}} \put(20,60){\line(2,-3){20}}
\put(20,70){\line(2,-3){20}} \put(20,80){\line(2,-3){20}}
\put(20,90){\line(2,-3){20}}
\put(15,30){\makebox(0,0){$\overline{13}$}}
\put(15,40){\makebox(0,0){$\overline{11}$}}
\put(15,50){\makebox(0,0){$\overline{9}$}}
\put(15,60){\makebox(0,0){$\overline{7}$}}
\put(15,70){\makebox(0,0){$\overline{5}$}}
\put(15,80){\makebox(0,0){$\overline{3}$}}
\put(15,90){\makebox(0,0){$\overline{1}$}}
\put(45,30){\makebox(0,0){$\overline{14}$}}
\put(45,40){\makebox(0,0){$\overline{12}$}}
\put(45,50){\makebox(0,0){$\overline{10}$}}
\put(45,60){\makebox(0,0){$\overline{8}$}}
\put(45,70){\makebox(0,0){$\overline{6}$}}
\put(45,80){\makebox(0,0){$\overline{4}$}}
\put(45,90){\makebox(0,0){$\overline{2}$}}
\end{picture}
}

\put(210,0){
\begin{picture}(70,100)(0,15)
\put(30,100){\dashline{2.5}(0,0)(0,15)}
\put(15,110){\makebox(0,0){\boldmath{$V_1$}}}
\put(45,110){\makebox(0,0){\boldmath{$V_2$}}}
\put(20,30){\makebox(0,0){$\bullet$}}
\put(20,40){\makebox(0,0){$\bullet$}}
\put(20,50){\makebox(0,0){$\bullet$}}
\put(20,60){\makebox(0,0){$\bullet$}}
\put(20,70){\makebox(0,0){$\bullet$}}
\put(20,80){\makebox(0,0){$\bullet$}}
\put(20,90){\makebox(0,0){$\bullet$}}
\put(40,30){\makebox(0,0){$\bullet$}}
\put(40,40){\makebox(0,0){$\bullet$}}
\put(40,50){\makebox(0,0){$\bullet$}}
\put(40,60){\makebox(0,0){$\bullet$}}
\put(40,70){\makebox(0,0){$\bullet$}}
\put(40,80){\makebox(0,0){$\bullet$}}
\put(40,90){\makebox(0,0){$\bullet$}} \put(20,30){\line(1,0){20}}
\put(20,40){\line(1,0){20}} \put(20,50){\line(1,0){20}}
\put(20,60){\line(1,0){20}} \put(20,70){\line(1,0){20}}
\put(20,80){\line(1,0){20}} \put(20,90){\line(1,0){20}}
\put(15,30){\makebox(0,0){$\overline{13}$}}
\put(15,40){\makebox(0,0){$\overline{11}$}}
\put(15,50){\makebox(0,0){$\overline{9}$}}
\put(15,60){\makebox(0,0){$\overline{7}$}}
\put(15,70){\makebox(0,0){$\overline{5}$}}
\put(15,80){\makebox(0,0){$\overline{3}$}}
\put(15,90){\makebox(0,0){$\overline{1}$}}
\put(45,30){\makebox(0,0){$\overline{14}$}}
\put(45,40){\makebox(0,0){$\overline{12}$}}
\put(45,50){\makebox(0,0){$\overline{10}$}}
\put(45,60){\makebox(0,0){$\overline{8}$}}
\put(45,70){\makebox(0,0){$\overline{6}$}}
\put(45,80){\makebox(0,0){$\overline{4}$}}
\put(45,90){\makebox(0,0){$\overline{2}$}}
\end{picture}
}
\end{picture}

\caption{The four steps in the first pass of the parallel protocol
for $n=14$.} \label{fig:graphpar1}
\end{figure*}
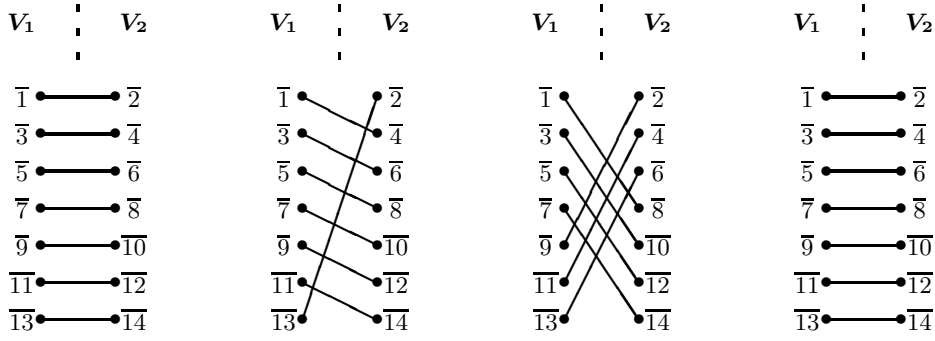

An interesting variant, which we call Parallel-gossip-pos$_G$($d$),
is to consider time steps instead of calls, and thus suppose that in
each time step several calls are executed in parallel. However, each
agent can only make one call in any given time step. We denote by
Parallel-gossip$_G$($d$) the problem of establishing all depth-$d$
positive epistemic fluents. For Parallel-gossip$_G$(1) on a complete
graph $G$, if the number of agents $n$ is even, the time taken (in
number of steps) is $\lceil \log_2 n \rceil$, and if $n$ is odd, it
is $\lceil \log_2 n \rceil + 1$ \cite{Bavelas1950, Landau1954,
Knodel1975}.
%It has been
%shown that this number can be achieved on a hypercube
%$G$~\cite{FujitaPP98}. We now generalise these results to the case of
%arbitrary epistemic depth.
We now generalise this to the case of arbitrary epistemic depth $d$.

%We denote by $H(r)$ the $r$-dimensional hypercube, i.e. the graph on vertices $1,\ldots,2^r$
%and with an edge $\{u,v\}$ for all pairs of vertices $u,v$ such that the binary representations of the numbers $u-1$ and $v-1$
%differ in exactly one bit.

\begin{proposition}\label{thm:parpos}
For $n \geq 2$, if the $n$-vertex graph $G$ has the complete
bipartite graph $K_{\lceil n/2 \rceil, \lfloor n/2 \rfloor}$ as a
subgraph, then any instance of Parallel-gossip-pos$_G$($d$) has a
solution with  $d(\lceil \log_2 n \rceil-1)+1$ time steps if $n$ is
even, or $d\lceil \log_2 n \rceil+1$ time steps if $n$ is odd.
\end{proposition}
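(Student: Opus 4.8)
The plan is to take the classical optimal depth-1 parallel protocol recalled just above, run it as the first of $d$ consecutive \emph{passes}, and show that because each pass ends in a state that is strictly stronger than the bare depth-1 goal, every pass after the first raises the attained epistemic depth by one using only $\lceil\log_2 n\rceil-1$ further time steps (resp.\ $\lceil\log_2 n\rceil$ if $n$ is odd). Summing the pass lengths, $\lceil\log_2 n\rceil+(d-1)(\lceil\log_2 n\rceil-1)$ (resp.\ $(\lceil\log_2 n\rceil+1)+(d-1)\lceil\log_2 n\rceil$), then yields the two claimed bounds.

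I would work entirely inside the subgraph $K_{\lceil n/2\rceil,\lfloor n/2\rfloor}$, with parts $V_1$ (the larger one) and $V_2$, so that in each time step the calls form a matching between $V_1$ and $V_2$ (as in Figure~\ref{fig:graphpar1}). The first pass is the standard bipartite layout of the $\lceil\log_2 n\rceil$-step depth-1 protocol for $n$ even: a ``grouping'' step $M$ that matches $V_1$ with $V_2$ into $n/2$ pairs, then $\lceil\log_2 n\rceil-2$ ``shift'' steps in which each agent doubles the block of secrets it knows, then the step $M$ once more; the cases $n$ odd (unequal parts, one vertex of $V_1$ unmatched by $M$) and $n$ not a power of two are handled by the usual minor modifications, which for $n$ odd cost one extra step in each pass. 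Every later pass consists of the same shift steps followed again by the step $M$, but \emph{omits the leading grouping step $M$}; this is exactly what makes each later pass one step shorter than the first.

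The correctness would be proved by induction on the pass index $m$, with hypothesis $H(m)$: at the end of pass $m$ (i)~$T_{m+1}$ holds, i.e.\ every agent knows every depth-$(m{-}1)$ fluent; and (ii)~for every pair $\{a,b\}$ of the matching $M$, both $a$ and $b$ know every depth-$m$ fluent whose leading modality is $K_a$ or $K_b$. The base case $m=1$ is exactly the classical fact \cite{Bavelas1950,Landau1954,Knodel1975}: pass 1 realises depth-1 gossip, and because its last step is the matching $M$, the liberal semantics of a call (afterwards, every $K$-string over the two callers prefixed to any fluent one of them previously knew holds, and both callers know all these) immediately gives clause (ii). For $H(m)\Rightarrow H(m{+}1)$: by $H(m)$(ii) each $M$-pair already jointly holds, at the start of pass $m{+}1$, the ``payload'' consisting of the depth-$m$ fluents rooted at its two members, so the leading grouping step can indeed be skipped; the shift steps then distribute these pair-payloads across $V_1\cup V_2$ exactly as the depth-1 engine distributes the $n$ secrets in the first pass, and the final step $M$ completes the distribution, so that afterwards every agent knows every depth-$m$ fluent ($=T_{m+2}$) and, applying the call semantics to that final $M$ once more, clause (ii) holds at level $m{+}1$. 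After $d$ passes $T_{d+1}$ holds, which subsumes every positive goal of depth $\le d$, so in particular it solves Gossip-pos$_G(d)$.

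The step I expect to be the main obstacle is pinning down clause (ii) and, with it, the accounting ``one pass per epistemic level, with the leading $M$ dropped after the first pass'': one must check carefully that the generous effect of a call really does move the depth-$m$ pair-payloads through the bipartite network in lockstep with the way the underlying depth-1 protocol moves the secrets — in particular that ``hearing'' a pair-payload that contains the fluent $K_c g$ lets an agent $a$ deduce $K_a K_c g$ — and that dropping the leading $M$ step in pass $m{+}1$ is compensated \emph{exactly} by clause (ii) of $H(m)$. A secondary, essentially routine point is to confirm that the optimal depth-1 parallel protocol can be realised within $K_{\lceil n/2\rceil,\lfloor n/2\rfloor}$ and that the odd-$n$ correction costs precisely one extra step per pass.
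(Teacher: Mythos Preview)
Your overall strategy---run the optimal depth-1 parallel protocol, observe that its final step leaves each just-matched pair with exactly the mutual knowledge that step~1 of a fresh run would produce, and therefore shorten every subsequent pass by one step---is precisely the paper's argument, and your clause~(ii) is the right invariant. The obstacle you flag (checking that the depth-$m$ pair-payloads propagate through the network just as the secrets do) is not where the difficulty lies.

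The actual gap is in your concrete protocol. You describe the first pass as ``$M$; then $\lceil\log_2 n\rceil-2$ shift steps; then the step $M$ once more'', with the \emph{same} matching $M$ at both ends. This is not the shape of an optimal depth-1 protocol for general even $n$. For $n=4$ it degenerates to $M,M$, which leaves each $M$-pair knowing only its own two secrets. For $n=8$ it is $M$, one shift, $M$: after the first $M$ every agent holds two secrets, one further bipartite matching raises this to at most four, and since the two members of an $M$-pair still share their original two secrets, the final $M$ gives each pair a union of at most $4+4-2=6<8$ secrets. So the step you called ``essentially routine''---realising the depth-1 protocol inside $K_{\lceil n/2\rceil,\lfloor n/2\rfloor}$ in the form you need---is exactly where the proposal breaks.

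The paper's protocol uses $\lceil\log_2 n\rceil$ \emph{distinct} matchings in $\mathbb{Z}/n\mathbb{Z}$: step $s$ performs $\text{CALL}_{\overline{2i+1},\,\overline{2i+2^s}}$ for all $i$. Your clause~(ii) is then recovered not by ending on $M$ but by \emph{relabelling} the even agents between passes via $\pi(\overline{2i+2^{\lceil\log_2 n\rceil}})=\overline{2i+2}$, so that the last matching of pass $m$ becomes, under the new names, the step-1 matching of pass $m{+}1$; that step can then be skipped and your induction goes through unchanged. For odd $n$ the paper does not simply pad each pass by one step: it fixes a power-of-two core $V_{\text{first}}$, spends one preliminary step depositing the outside secrets into the core, and then runs $d$ passes of $\lceil\log_2 n\rceil$ steps each (the even-case first pass on $V_{\text{first}}$ followed by one step returning information to $V_{\text{last}}$). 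The totals agree with yours, but the structure differs from what you sketch.
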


\begin{proof}
Suppose that $G$ has $K_{\lceil n/2 \rceil, \lfloor n/2 \rfloor}$ as
a subgraph. So we can partition the vertex set of $G$ into two
subsets $V_1$ and $V_2$ of size $\lceil n/2 \rceil$ and $\lfloor n/2
\rfloor$, respectively, such that $G$ has an edge $\{i,j\}$ for each
$i \in V_1$ and $j \in V_2$. We can number agents by elements of the
ring $\mathbb{Z}/n\mathbb{Z}=\{\overline{1},\ldots,\overline{n}\}$ so
that for all $i\in\mathbb{Z}$, $\overline{2i+1} \in V_1$  and
$\overline{2i+2} \in V_2$, where $\overline{x}$ denotes the
corresponding element of $\mathbb{Z}/n\mathbb{Z}$ for all
$x\in\mathbb{Z}$. We consider separately the cases $n$ even and $n$
odd.

For even $n$, consider the following protocol:
%(where $\{u,v\}$ represents the corresponding
%action CALL$_{uv}$):
\begin{tabbing}
\  \= first pass: \\
\>\ \ \  \= For each step $s$ from 1 to $\lceil \log_2 n \rceil$: \\
\> \> \ \ \ \= $\forall i \in \{0, \ldots, (\frac{n}{2}-1)\}$, \call {\overline{2i+1}} {\overline{2i+2^s}} \\ \\
\> subsequent passes: \\
\> \>  Reorder even agents according to the permutation $\pi$ \\
\> \> \> \ \ \ \ given by $\pi(\overline{2i+2^{\lceil \log_2 n \rceil}}) = \overline{2i+2}$; \\
\> \>  Proceed as in the first pass but only for steps $s$ from 2 to $\lceil \log_2 n \rceil$
\end{tabbing}
The first pass of this protocol is illustrated in
Figure~\ref{fig:graphpar1} for $n=14$. Calls are represented by a
line joining two agents.

In the first pass, because of the calls \call {\overline{2i+1}} {\overline{2i+2}}, 
the first step establishes for all
$i\in\mathbb{Z}$, $K_{\overline{2i+1}} s_{\overline{2i+2}}$ and
$K_{\overline{2i+2}} s_{\overline{2i+1}}$. Suppose that after step
$s$, for all $i\in\mathbb{Z}$, we have the conjunction of
$K_{\overline{2i+1}} s_{j}$ and $K_{\overline{2i+2^s}} s_{j}$ for all
$j \in \{\overline{2i+1},\ldots,\overline{2i+2^s}\}$. We have just
seen that this is true for $s=1$ (given that each agent knows his
own secret). In particular, if we replace $i$ by $i+2^{s-1}$ we have
$K_{\overline{2i+2^s+1}} s_{j}$ and $K_{\overline{2i+2^{s+1}}} s_{j}$
for all $j \in \{\overline{2i+2^s+1},\ldots,\overline{2i+2^{s+1}}\}$.
At step $s+1$, we make the calls \call {\overline{2i+1}} {\overline{2i+2^{s+1}}}
for all $i\in\mathbb{Z}$, and this
establishes $K_{\overline{2i+1}} s_{j}$ and
$K_{\overline{2i+2^{s+1}}} s_{j}$ for all $j \in
\{\overline{2i+1},\ldots,\overline{2i+2^{s+1}}\}$. By induction on
$s$, it is easily seen that after $\lceil \log_2 n \rceil$ steps, for
all $i\in\mathbb{Z}$, we have $K_{\overline{2i+1}} s_{j}$ and
$K_{\overline{2i+2}} s_{j}$ for all $j \in \mathbb{Z}/n\mathbb{Z}$.
This means that at the end of the first pass $\forall i,j \in
\{\overline{2i+1},\ldots,\overline{2i+2^{s+1}}\}$, $K_i s_j$.

Let $T_r$ be the conjunction of $K_{j_1} \ldots K_{j_{r-1}} s_{j_r}$
for all $j_1,\ldots,j_{r} \in \mathbb{Z}/n\mathbb{Z}$. We have just
seen that after the first pass $T_2$ is true. Suppose that at the end
of pass $r$, $T_{r+1}$ is true. For the next pass $r+1$,
\call {\overline{2i+1}} {\overline{2i+2^{\lceil \log_2 n \rceil}}} are
the calls in last step of the previous pass $r$. Hence, after
reordering even agents so that $\overline{2i+2^{\lceil \log_2 n
\rceil}}$ replaces $\overline{2i+2}$, we already have for all
$i\in\mathbb{Z}$, $K_{\overline{2i+1}} K_{\overline{2i+2}} T_r$ and
$K_{\overline{2i+2}} K_{\overline{2i+1}} T_r$. We then proceed as for
the first pass replacing $s_j$ by $K_j T_r$ to establish $T_{r+2}$ in
$\lceil \log_2 n \rceil-1$ more steps.

It therefore takes $d$ passes to establish all possible depth-$d$
epistemic goals $T_{d+1}$. The first pass takes $\lceil \log_2 n
\rceil$ steps and the next $d-1$ passes $\lceil \log_2 n \rceil-1$
steps, making a total of $d(\lceil \log_2 n \rceil-1)+1$ steps.

For odd $n$, one can place the first $2^{\lfloor \log_2 n \rfloor}$
agents in a subset $V_{\textit{first}}$, the others being in a subset
$V_{\textit{last}}$ (see the example in Figure~\ref{fig:graphpar} for
$n=13$). Consider the following protocol:
\begin{tabbing}
\ \ \ \= preliminary step: \\
\> \ \ \   \= Each agent in $V_1\cap V_{\textit{last}}$ calls one agent in $V_2\cap V_{\textit{first}}$, \\
\> \> and each agent in $V_2\cap V_{\textit{last}}$ calls one agent in $V_1\cap V_{\textit{first}}$ \\ \\
\> subsequent passes: \\
\>  \> Proceed in $V_{\textit{first}}$ as for the first pass of even case in $\mathbb{Z}/2^{\lfloor \log_2 n \rfloor}\mathbb{Z}$; \\
\> \ \ \   \= Each agent in $V_1\cap V_{\textit{last}}$ calls one agent in $V_2\cap V_{\textit{first}}$, \\
\> \> and each agent in $V_2\cap V_{\textit{last}}$ calls one agent in $V_1\cap V_{\textit{first}}$
\end{tabbing}
A typical pass of this protocol is illustrated in
Figure~\ref{fig:graphpar}. The preliminary step is the step on the
right of this figure.

In the preliminary step, all agents $i_1,\ldots,i_m \in
V_{\textit{last}}$ distribute their knowledge to some agents
$j_1,\ldots,j_m \in V_{\textit{first}}$. Hence, after this step we
have $K_{j_k} s_{i_k}$ for all $k\in\{1,\ldots,m\}$. For each
subsequent pass $r$, it takes $\lfloor \log_2 n \rfloor=\lceil \log_2
n \rceil-1$ steps to distribute knowledge from all agents in
$V_{\textit{first}}$ (hence, in $V_{\textit{last}}$ too because of
the previous step) and establish $K_j T_r$ for all $j \in
V_{\textit{first}}$. Then agents $j_1,\ldots,j_m \in
V_{\textit{first}}$ respectively call the agents $i_1,\ldots,i_m \in
V_{\textit{last}}$ in one more step to establish $T_{r+1}$. These
last calls also establish $K_{j_k} K_{i_k} T_r$ for all
$k\in\{1,\ldots,m\}$ if necessary for the next pass $r+1$.

It takes one preliminary step and $d$ passes of $\lceil \log_2 n
\rceil$ steps to establish all possible depth-$d$ epistemic goals
$T_{d+1}$, which makes a total of $d\lceil \log_2 n \rceil+1$ steps.
\end{proof}

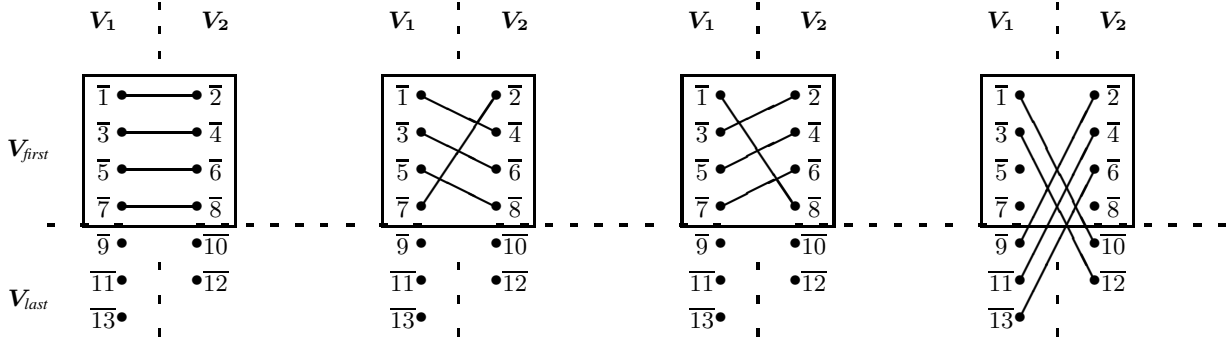
\begin{figure*}[t]
\centering

\begin{picture}(325,95)(-5,20)

\put(-5,0){
\begin{picture}(70,115)(-5,0)
\put(30,100){\dashline{2.5}(0,0)(0,15)}
\put(30,25){\dashline{2.5}(0,0)(0,20)}
\put(15,110){\makebox(0,0){\boldmath{$V_1$}}}
\put(45,110){\makebox(0,0){\boldmath{$V_2$}}}
\put(40,55){\dashline{2.5}(0,0)(35,0)}
\put(0,55){\dashline{2.5}(0,0)(20,0)}
\put(-5,75){\makebox(0,0){\boldmath{$V_{\textit{first}}$}}}
\put(-5,35){\makebox(0,0){\boldmath{$V_{\textit{last}}$}}}
\put(10,55){\framebox(40,40){}} \put(20,30){\makebox(0,0){$\bullet$}}
\put(20,40){\makebox(0,0){$\bullet$}}
\put(20,50){\makebox(0,0){$\bullet$}}
\put(20,60){\makebox(0,0){$\bullet$}}
\put(20,70){\makebox(0,0){$\bullet$}}
\put(20,80){\makebox(0,0){$\bullet$}}
\put(20,90){\makebox(0,0){$\bullet$}}
\put(40,40){\makebox(0,0){$\bullet$}}
\put(40,50){\makebox(0,0){$\bullet$}}
\put(40,60){\makebox(0,0){$\bullet$}}
\put(40,70){\makebox(0,0){$\bullet$}}
\put(40,80){\makebox(0,0){$\bullet$}}
\put(40,90){\makebox(0,0){$\bullet$}}
%step2
\put(20,60){\line(1,0){20}} \put(20,70){\line(1,0){20}}
\put(20,80){\line(1,0){20}} \put(20,90){\line(1,0){20}}
\put(15,30){\makebox(0,0){$\overline{13}$}}
\put(15,40){\makebox(0,0){$\overline{11}$}}
\put(15,50){\makebox(0,0){$\overline{9}$}}
\put(15,60){\makebox(0,0){$\overline{7}$}}
\put(15,70){\makebox(0,0){$\overline{5}$}}
\put(15,80){\makebox(0,0){$\overline{3}$}}
\put(15,90){\makebox(0,0){$\overline{1}$}}
\put(45,40){\makebox(0,0){$\overline{12}$}}
\put(45,50){\makebox(0,0){$\overline{10}$}}
\put(45,60){\makebox(0,0){$\overline{8}$}}
\put(45,70){\makebox(0,0){$\overline{6}$}}
\put(45,80){\makebox(0,0){$\overline{4}$}}
\put(45,90){\makebox(0,0){$\overline{2}$}}
\end{picture}}

\put(80,0){
\begin{picture}(70,115)(0,0)
\put(30,100){\dashline{2.5}(0,0)(0,15)}
\put(30,25){\dashline{2.5}(0,0)(0,20)}
\put(15,110){\makebox(0,0){\boldmath{$V_1$}}}
\put(45,110){\makebox(0,0){\boldmath{$V_2$}}}
\put(40,55){\dashline{2.5}(0,0)(35,0)}
\put(0,55){\dashline{2.5}(0,0)(20,0)} \put(10,55){\framebox(40,40){}}
\put(20,30){\makebox(0,0){$\bullet$}}
\put(20,40){\makebox(0,0){$\bullet$}}
\put(20,50){\makebox(0,0){$\bullet$}}
\put(20,60){\makebox(0,0){$\bullet$}}
\put(20,70){\makebox(0,0){$\bullet$}}
\put(20,80){\makebox(0,0){$\bullet$}}
\put(20,90){\makebox(0,0){$\bullet$}}
\put(40,40){\makebox(0,0){$\bullet$}}
\put(40,50){\makebox(0,0){$\bullet$}}
\put(40,60){\makebox(0,0){$\bullet$}}
\put(40,70){\makebox(0,0){$\bullet$}}
\put(40,80){\makebox(0,0){$\bullet$}}
\put(40,90){\makebox(0,0){$\bullet$}}
%step3
\put(20,60){\line(2,3){20}} \put(20,70){\line(2,-1){20}}
\put(20,80){\line(2,-1){20}} \put(20,90){\line(2,-1){20}}
\put(15,30){\makebox(0,0){$\overline{13}$}}
\put(15,40){\makebox(0,0){$\overline{11}$}}
\put(15,50){\makebox(0,0){$\overline{9}$}}
\put(15,60){\makebox(0,0){$\overline{7}$}}
\put(15,70){\makebox(0,0){$\overline{5}$}}
\put(15,80){\makebox(0,0){$\overline{3}$}}
\put(15,90){\makebox(0,0){$\overline{1}$}}
\put(45,40){\makebox(0,0){$\overline{12}$}}
\put(45,50){\makebox(0,0){$\overline{10}$}}
\put(45,60){\makebox(0,0){$\overline{8}$}}
\put(45,70){\makebox(0,0){$\overline{6}$}}
\put(45,80){\makebox(0,0){$\overline{4}$}}
\put(45,90){\makebox(0,0){$\overline{2}$}}
\end{picture}
}

\put(160,0){
\begin{picture}(70,115)(0,0)
\put(30,100){\dashline{2.5}(0,0)(0,15)}
\put(30,25){\dashline{2.5}(0,0)(0,20)}
\put(15,110){\makebox(0,0){\boldmath{$V_1$}}}
\put(45,110){\makebox(0,0){\boldmath{$V_2$}}}
\put(40,55){\dashline{2.5}(0,0)(35,0)}
\put(0,55){\dashline{2.5}(0,0)(20,0)} \put(10,55){\framebox(40,40){}}
\put(20,30){\makebox(0,0){$\bullet$}}
\put(20,40){\makebox(0,0){$\bullet$}}
\put(20,50){\makebox(0,0){$\bullet$}}
\put(20,60){\makebox(0,0){$\bullet$}}
\put(20,70){\makebox(0,0){$\bullet$}}
\put(20,80){\makebox(0,0){$\bullet$}}
\put(20,90){\makebox(0,0){$\bullet$}}
\put(40,40){\makebox(0,0){$\bullet$}}
\put(40,50){\makebox(0,0){$\bullet$}}
\put(40,60){\makebox(0,0){$\bullet$}}
\put(40,70){\makebox(0,0){$\bullet$}}
\put(40,80){\makebox(0,0){$\bullet$}}
\put(40,90){\makebox(0,0){$\bullet$}}
%step4
\put(20,60){\line(2,1){20}} \put(20,70){\line(2,1){20}}
\put(20,80){\line(2,1){20}} \put(20,90){\line(2,-3){20}}
\put(15,30){\makebox(0,0){$\overline{13}$}}
\put(15,40){\makebox(0,0){$\overline{11}$}}
\put(15,50){\makebox(0,0){$\overline{9}$}}
\put(15,60){\makebox(0,0){$\overline{7}$}}
\put(15,70){\makebox(0,0){$\overline{5}$}}
\put(15,80){\makebox(0,0){$\overline{3}$}}
\put(15,90){\makebox(0,0){$\overline{1}$}}
\put(45,40){\makebox(0,0){$\overline{12}$}}
\put(45,50){\makebox(0,0){$\overline{10}$}}
\put(45,60){\makebox(0,0){$\overline{8}$}}
\put(45,70){\makebox(0,0){$\overline{6}$}}
\put(45,80){\makebox(0,0){$\overline{4}$}}
\put(45,90){\makebox(0,0){$\overline{2}$}}
\end{picture}
}

\put(240,0){
\begin{picture}(70,115)(0,0)
\put(30,100){\dashline{2.5}(0,0)(0,15)}
\put(30,25){\dashline{2.5}(0,0)(0,20)}
\put(15,110){\makebox(0,0){\boldmath{$V_1$}}}
\put(45,110){\makebox(0,0){\boldmath{$V_2$}}}
\put(40,55){\dashline{2.5}(0,0)(35,0)}
\put(0,55){\dashline{2.5}(0,0)(20,0)} \put(10,55){\framebox(40,40){}}
\put(20,30){\makebox(0,0){$\bullet$}}
\put(20,40){\makebox(0,0){$\bullet$}}
\put(20,50){\makebox(0,0){$\bullet$}}
\put(20,60){\makebox(0,0){$\bullet$}}
\put(20,70){\makebox(0,0){$\bullet$}}
\put(20,80){\makebox(0,0){$\bullet$}}
\put(20,90){\makebox(0,0){$\bullet$}}
\put(40,40){\makebox(0,0){$\bullet$}}
\put(40,50){\makebox(0,0){$\bullet$}}
\put(40,60){\makebox(0,0){$\bullet$}}
\put(40,70){\makebox(0,0){$\bullet$}}
\put(40,80){\makebox(0,0){$\bullet$}}
\put(40,90){\makebox(0,0){$\bullet$}}
%step1
\put(20,30){\line(1,2){20}} \put(20,40){\line(1,2){20}}
\put(20,50){\line(1,2){20}} \put(20,80){\line(1,-2){20}}
\put(20,90){\line(1,-2){20}}
\put(15,30){\makebox(0,0){$\overline{13}$}}
\put(15,40){\makebox(0,0){$\overline{11}$}}
\put(15,50){\makebox(0,0){$\overline{9}$}}
\put(15,60){\makebox(0,0){$\overline{7}$}}
\put(15,70){\makebox(0,0){$\overline{5}$}}
\put(15,80){\makebox(0,0){$\overline{3}$}}
\put(15,90){\makebox(0,0){$\overline{1}$}}
\put(45,40){\makebox(0,0){$\overline{12}$}}
\put(45,50){\makebox(0,0){$\overline{10}$}}
\put(45,60){\makebox(0,0){$\overline{8}$}}
\put(45,70){\makebox(0,0){$\overline{6}$}}
\put(45,80){\makebox(0,0){$\overline{4}$}}
\put(45,90){\makebox(0,0){$\overline{2}$}}
\end{picture}
}

\end{picture}

\caption{The four steps in each pass of the parallel protocol for
$n=13$. The step on the right also occurs on its own as a preliminary
step.} \label{fig:graphpar}
\end{figure*}

It is worth pointing out that determining whether a $n$-vertex graph
$G$ has the complete bipartite graph $K_{\lceil n/2 \rceil, \lfloor
n/2 \rfloor}$ as a subgraph can be achieved in polynomial time. To
see this, firstly observe that any pair of vertices $i,j$ of $G$
which are not joined by an edge must be in the same part in the
complete bipartite graph. In linear time, we can partition the
vertices of $G$ into subsets $S_1,\ldots,S_r$ such that vertices
$i,j$ not joined by an edge in $G$ belong to the same set $S_t$ (for
some $1 \leq t \leq r$). It only remains to test whether it is
possible to partition the numbers $|S_1|,\ldots,|S_r|$ into two sets
whose sums are $\lceil n/2 \rceil$ and $\lfloor n/2 \rfloor$. This
partition problem can be solved by dynamic programming in $O(r(|S_1|
+ \cdots + |S_r|))$ time and space, which is at worst quadratic since
$r \leq n$ and $|S_1| + \cdots + |S_r| =
n$~\cite{DBLP:conf/ijcai/Korf09}. On the other hand, it is known that
deciding whether Directional-gossip($1$) (the problem in which the
digraph $G$ is part of the input) can be solved in a given number of
steps is NP-complete~\cite{KrummeCV92}.

We now show that the solution plans given in the proof of
Proposition~\ref{thm:parpos} are optimal in the number of steps.

\begin{theorem} \label{thm:par}
The number of steps required to solve Parallel-gossip$_G$($d$) (for
any graph $G$) is at least $d(\lceil \log_2 n \rceil-1)+1$ if $n$ is
even, or $d\lceil \log_2 n \rceil+1$ if $n$ is odd.
\end{theorem}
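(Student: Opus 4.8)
The plan is to bound from below the least number $\tau(r)$ of parallel steps that any plan — on any graph, hence in particular on the complete graph $K_n$ — needs in order to establish $T_{r+1}$, and then to instantiate $r=d$. The argument is an induction on $r$, in the same spirit as the proof of Theorem~\ref{thm:minpos} but with ``one call'' replaced by ``one parallel step'' and the ``$n-2$ extra calls'' replaced by ``$\lceil\log_2 n\rceil-1$ extra steps''. The base case $r=1$ is the classical bound on parallel gossip quoted above: $\tau(1)=\lceil\log_2 n\rceil$ when $n$ is even and $\tau(1)=\lceil\log_2 n\rceil+1$ when $n$ is odd.

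For the inductive step, take any plan establishing $T_{r+2}$ and let $t_0$ be the first step after which $T_{r+1}$ holds; since the first $t_0$ steps of the plan already establish $T_{r+1}$, we have $t_0\ge\tau(r)$. At step $t_0$ some agent $j$ learns, through its call with a partner $i$, the last conjunct $\phi_0=K_{a_1}\cdots K_{a_{r-1}}s_{a_r}$ of $T_r$ that was still missing for $j$; this parallels the choice of $j$ and of the call $(i,j)$ in the proof of Theorem~\ref{thm:minpos}. Set $\psi=K_j\phi_0$, a single positive fluent of depth $r$. It is false before step $t_0$ and true from step $t_0$ on, and, using the precise form of the state update of a call, the only agents that know $\psi$ at the end of step $t_0$ are $i$ and $j$: no agent can know $\psi$ before $t_0$ (truthfulness of knowledge), and the only call of step $t_0$ that can create knowledge of $\psi$ is $(i,j)$ itself, which creates it for $i$ and $j$ only. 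Because $\psi$ is a single fluent, the set of agents that know it can at most double from one step to the next; and since $\psi$ occurs as the suffix of the conjuncts $K_cK_j\phi_0$ of $T_{r+2}$, \emph{every} agent must know $\psi$ before $T_{r+2}$ is established. Hence at least $\lceil\log_2(n/2)\rceil=\lceil\log_2 n\rceil-1$ further steps are required after $t_0$, so $\tau(r+1)\ge\tau(r)+\lceil\log_2 n\rceil-1$. For even $n$ this closes the induction: $\tau(d)\ge\lceil\log_2 n\rceil+(d-1)(\lceil\log_2 n\rceil-1)=d(\lceil\log_2 n\rceil-1)+1$.

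For odd $n$ one more step must be gained at each level, and this comes from a parity argument. Since $n$ is odd, every parallel step leaves at least one agent idle; in particular the last step of any plan contains an agent $a$ making no call, so $a$ must already satisfy all its goals after the penultimate step. At the base level this is exactly the fact that the idle agent must know all $n$ secrets one step early, which — as an agent's stock of secrets at most doubles per step — forces $\tau(1)\ge\lceil\log_2 n\rceil+1$ rather than $\lceil\log_2 n\rceil$. Combining this idle‑agent constraint with the doubling analysis of the bottleneck fluent $\psi$ at each level yields $\tau(r+1)\ge\tau(r)+\lceil\log_2 n\rceil$ and hence $\tau(d)\ge d\lceil\log_2 n\rceil+1$. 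I expect the part requiring the most care to be precisely this odd‑$n$ bookkeeping: one has to check that the parity improvement is genuinely available at every level and not only at the base — i.e. that the step in which $\psi$ reaches all $n$ agents, together with the forced idle agent, really costs a full extra step beyond the even estimate. The even case and the inductive skeleton are routine once the ``bottleneck fluent'' lemma (the exact description of which agents know a given deep fluent immediately after a call) is in place.
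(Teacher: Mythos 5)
Your even-$n$ argument is essentially the paper's own proof: the same induction on the epistemic level, the same base case imported from classical parallel gossip, and the same key step that when $T_{r+1}$ first becomes true only the two participants $i,j$ of the decisive call know the newly created knowledge, after which a doubling argument forces at least $\lceil\log_2 n\rceil-1$ further steps; your device of tracking the single bottleneck fluent $\psi=K_j\phi_0$ instead of the whole conjunction $T_{r+1}$ is a slightly cleaner way of making the ``only $i$ and $j$ know it'' claim and the doubling bound precise, but it is the same route and it is correct.

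The odd-$n$ inductive step, which you yourself flag as the delicate point, is a genuine gap, and it cannot be closed in the way you sketch. The idle-agent/parity observation buys the extra step only at the base level, where one specific agent must gather all $n$ secrets from scratch (the agent idle in the last step must hold all $n$ secrets one step early, and its stock of secrets at most doubles per step). At higher levels the situation is different: after step $t_0$ the fluent $\psi$ is already held by \emph{two} agents, and spreading it to all $n$ agents needs only $\lceil\log_2(n/2)\rceil=\lceil\log_2 n\rceil-1$ steps even though some agent idles at every step (for $n=5$: $2\to 4\to 5$), so no extra step is forced and the recurrence $\tau(r+1)\ge\tau(r)+\lceil\log_2 n\rceil$ does not follow from your two ingredients. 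Worse, that recurrence is simply false: for $n=3$ and $d=2$ the plan \call{1}{2}, \call{1}{3}, \call{2}{3}, \call{1}{2} establishes every depth-2 goal in $4$ steps, whereas $d\lceil\log_2 3\rceil+1=5$; so the per-level ``$+1$'' you are trying to prove already fails at $n=3$, and the only bound your method yields for odd $n$ is $\tau(d)\ge d(\lceil\log_2 n\rceil-1)+2$, the even-style recurrence with the extra step counted once at the base. You should be aware that the paper's own treatment of the odd case is a one-sentence assertion of the same ``one more step per level'' claim (``at least one agent doesn't communicate his knowledge on the first step'') and is open to exactly the same objection, so the weakness you honestly flagged is not an omission of a routine detail but the point where the stated odd-$n$ bound itself breaks down.
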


\begin{proof}
\fmnote{
Preuve un peu modifiee (sur le modele de la preuve du theoreme 4).
(Et le $T_{r+1}$ que j'avais oublie corrige !)
}
Consider any solution plan for Parallel-gossip$_G$($d$). Recall that
$T_r$ is the conjunction of $K_{i_1} \ldots K_{i_{r-1}} s_{i_r}$ for
all $i_1,\ldots,i_{r} \in \{1,\ldots,n\}$.

We give a proof by induction. For even $n$, suppose that at least
$r(\lceil \log_2 n \rceil-1)+1$ steps are required to establish
$T_{r+1}$. This is true for $r=1$ because it takes at least a sequence
of $\lceil \log_2 n \rceil$ steps of calls for knowledge from
any agent to reach $n$ agents (thus establishing $T_2$)
\cite{Bavelas1950, Landau1954, Knodel1975}.
For general $r$ and without loss of
generality, suppose that before the last step to establish it,
$T_{r+1}$ was false because of lack of knowledge of agent $j$ (i.e.
$K_j T_{r}$ was false). 
By induction hypothesis this is at least the $(r(\lceil \log_2 n \rceil{-}1))$-th step.
A call in this step involves $j$ and another agent, say $i$, and establishes
not only $T_{r+1}$, but also $K_j T_{r+1}$ and $K_i T_{r+1}$. 
However, $\neg K_k T_{r+1}$ holds both before and after this step,
for the agents $k$ distinct from $i$ and $j$. 
%
%Then $j$ must be involved in this last step to establish $K_j T_{r}$.
%Suppose that agent $i$ calls agent $j$.
%Then after this step, $T_{r+1}$ becomes true but only
%agents $i$ and $j$ are aware of the knowledge gained by $j$ to
%establish $K_j T_{r}$, i.e., 
%$K_i T_{r+1}$ and $K_j T_{r+1}$ but for every $k$ distinct from $i$ and $j$,
%$\neg K_k T_{r+1}$.
To establish $T_{r+2}$, it is necessary to
distribute $T_{r+1}$ from $i$ and $j$ to all other agents and this
takes at least $\lceil \log_2 n \rceil-1$ steps (since each step can at most double the
number $m$ of agents having this knowledge and thus
$\lceil \log_2 (n/2) \rceil$ steps are required to go from $m=2$ to $m=n$). Hence, at least
$(r+1)(\lceil \log_2 n \rceil-1)+1$ steps are required to establish
$T_{r+2}$.  By induction on $r$, we obtain the lower bound $d(\lceil
\log_2 n \rceil-1)+1$.

For odd $n$, the proof is similar but at least one more step is
required for each epistemic level $r$ because at least one agent
doesn't communicate his knowledge on the first step to establish
$T_{r+1}$. Hence, it takes at least a sequence of $\lceil \log_2 n
\rceil+1$ steps for knowledge from all $n$ agents to reach each
others, and the lower bound is $d\lceil \log_2 n \rceil+1$.
\end{proof}

It is interesting to note that it can happen that increasing the
number of secrets (and hence the number of agents) leads to less
steps. Consider the concrete example of 7 or 8 agents. By
Proposition~\ref{thm:parpos} and Theorem~\ref{thm:par}, the number of
steps decreases from $3d+1$ to $2d+1$ when the number of agents
increases from 7 to 8. We can explain this by the fact that in the
case of an odd number of agents, during each step there is
necessarily one agent who is not communicating. By adding an extra
agent, we can actually achieve a larger number of calls in a fewer
number of steps.

\section{Complexity of gossiping with negative goals}  \label{sec:gossip-neg}

Not surprisingly, when we allow negative goals, the gossip problem
becomes harder to solve. However, we will show that for several
different versions of this problem, we avoid the PSPACE complexity of
classical planning~\cite{Bylander94}.

We also consider a slightly more general version of the gossip
problem in which the maximum epistemic depth $d$ is no longer a
constant, but is part of the input. Let Gossip-pos and Gossip-neg be,
respectively, the same as Gossip-pos($d$) and Gossip-neg($d$) in
which there is no fixed bound $d$ on the maximum epistemic depth of
goal fluents. Although we do not specify the exact format in which
the goals are given, we make the assumption that this requires at
least $n+d+m$ space, where $m$ is the number of goal fluents. Recall
that in these versions of the gossip problem, the graph $G$ is also
part of the input.

\begin{theorem}
Gossip-pos $\in$ P. Indeed, if a solution plan exists, it can be found in polynomial time.
\end{theorem}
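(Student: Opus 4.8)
The plan is to reduce each individual goal fluent to a connectivity question in $G$ and then to show that one short protocol settles all of them at once. Recall from Section~\ref{sec:epiPlan} that a goal fluent $K_{i_1}\cdots K_{i_r}s_j$ is true in the initial state whenever $i_r=j$, and that otherwise there is a plan establishing it iff $G$ contains a walk from $j$ to $i_1$ that visits $i_r,\ldots,i_2$ in this order; concatenating shortest paths shows this last condition is equivalent to $j,i_1,\ldots,i_r$ all lying in a single connected component of $G$. I would first record the easy necessary direction: since a call exchanges information only between two agents joined by an edge, any knowledge acquired during a plan stays within a connected component, so if some goal fluent with $i_r\neq j$ has its agents $j,i_1,\ldots,i_r$ spread over more than one component, the instance has no solution plan.

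For the algorithm, compute the connected components of $G$ (linear time), and for each of the $m$ goal fluents $K_{i_1}\cdots K_{i_r}s_j$ with $i_r\neq j$ test whether $j,i_1,\ldots,i_r$ all lie in one component; each test costs $O(r)=O(d)$, for a total of $O(md)$, which is polynomial in the input size since the input has size at least $n+d+m$. If some test fails, report that no plan exists (correct by the paragraph above). Otherwise, let $d^\star\le d$ be the largest epistemic depth occurring among the goal fluents, and build a plan component by component: a singleton component needs no calls (its only fluents have the form $K_i\cdots K_i s_i$, already true), while on a component $C$ with $|C|\ge 2$, which is connected, Proposition~\ref{prop:connected} gives a protocol using only edges of $C$ and at most $d^\star(2|C|-3)$ calls that establishes \emph{all} positive epistemic fluents of depth up to $d^\star$ among the agents of $C$, hence in particular every goal fluent whose agents lie in $C$. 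Concatenating these protocols over all components gives a solution plan of total length at most $\sum_C d^\star(2|C|-3)\le 2nd^\star\le 2nd$, which is polynomial in the input size; moreover the whole procedure (feasibility test plus plan construction) runs in polynomial time, which proves the theorem.

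The one point that needs genuine care --- and the reason the fixed-$d$ STRIPS argument of Section~\ref{sec:epiPlan} does not carry over directly --- is that, with $d$ part of the input, the collection of ``all fluents of depth at most $d$'' has size roughly $n^d$ and so cannot be manipulated explicitly; the argument hinges on observing that only the $m$ named goal fluents matter, that each of them reduces to a connectivity test, and that a single protocol of length $O(nd)$ discharges all of them simultaneously. I expect the only mildly fiddly step to be spelling out the equivalence between ``there is a walk from $j$ visiting $i_r,\ldots,i_2$ in order'' and ``$j,i_1,\ldots,i_r$ lie in a common component of $G$'', together with the bookkeeping for the degenerate cases $i_r=j$ and $|C|=1$.
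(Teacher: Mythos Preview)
Your proposal is correct and follows essentially the same approach as the paper: compute connected components, declare infeasibility if some goal fluent spans more than one component, and otherwise apply the protocol of Proposition~\ref{prop:connected} to each component with $d$ set to the maximum epistemic depth among the goals. Your write-up is in fact more careful than the paper's in a few places --- you explicitly handle the degenerate cases $i_r=j$ and $|C|=1$, you bound the plan length, and you articulate why the fixed-$d$ STRIPS reduction from Section~\ref{sec:epiPlan} does not directly apply here --- but the underlying argument is the same.
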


\begin{proof}
The connected components of the graph $G$ can be determined in
polynomial time as can a spanning tree of each connected component.
If there is a fluent $K_{i_1} \ldots K_{i_r} s_j$ in $Goal$, where
the agents $i_1,\ldots,i_r,j$ do not all belong to the same connected
component of $G$, then the planning problem has no solution.
Otherwise, there is a solution obtained by applying the protocol
given in the proof of Proposition~\ref{prop:connected} to each
connected component and for a value of $d$ equal to the maximum
epistemic depth of goals. To construct this solution we only require
knowledge of the spanning tree of each connected component.
\end{proof}

When we allow negative goals the problem of deciding the existence of a solution plan
becomes NP-complete.

\begin{theorem} \label{thm:unbounded}  \label{thm:NPC}
Gossip-neg and Gossip-neg(1) are both NP-complete.
\end{theorem}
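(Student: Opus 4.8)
Since Gossip-neg(1) is the special case of Gossip-neg with depth bound $d=1$, it suffices to show that both problems lie in NP and that Gossip-neg(1) is NP-hard. For membership in NP I would prove that whenever a solution plan exists there is one of length polynomial in the size of the input, so that a solution can be guessed and checked in polynomial time. Two ingredients are used: (i) positive fluents are monotone, so appending calls to a plan never destroys an already-achieved positive goal; and (ii) negative goals are ``downward closed'', i.e.\ deleting calls from a plan can only reduce who knows what (every positive fluent true after a subsequence is true after the whole plan, since its witnessing chain of calls survives), and hence cannot turn a satisfied $\neg(K_{i_1}\cdots K_{i_r}s_j)$ into a violated one.

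Given any solution plan $P$, for each of the $m$ positive goal fluents $K_{i_1}\cdots K_{i_r}s_j$ (with $r\le d$) one extracts from $P$ a \emph{witness} subsequence of at most $d(n-1)$ calls that already establishes it: first a path of calls occurring in $P$ that carries $s_j$ to $i_r$, then a path carrying the fluent ``$i_r$ knows $s_j$'' to $i_{r-1}$, and so on, each of the $\le d$ stages using at most $n-1$ calls (repeated agents on a path can be short-circuited). The union of these $m$ witnesses, in the order induced by $P$, is again a subsequence of $P$; by (ii) it violates no negative goal and by (i) it still establishes every positive goal, so it is a solution plan of length at most $m\,d\,(n-1)$, which is polynomial because the input has size at least $n+d+m$. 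For Gossip-neg(1) this is immediate even without the witness argument: a plan with no redundant call has at most $n^2$ calls, since each non-redundant call strictly increases the number of true fluents $K_is_j$.

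For NP-hardness I would reduce from 3-SAT. Given a 3-CNF formula $\varphi$ with variables $x_1,\dots,x_p$ and clauses $C_1,\dots,C_q$, construct a graph $G_\varphi$ together with positive and negative depth-$1$ goals as follows. Each variable $x$ gets a \emph{variable gadget} forcing an exclusive binary choice: its core is an agent $m_x$ whose only neighbours are two agents $A_x$ and $B_x$, and the negative goals $\neg K_{A_x}s_{B_x}$ and $\neg K_{B_x}s_{A_x}$ make it impossible for $m_x$ to call both of them (whichever it called first, the second call would leak that agent's secret), while a positive goal requiring $s_{m_x}$ to reach a vertex otherwise inaccessible forces $m_x$ to call at least one; calling $A_x$ encodes $x=\mathit{true}$, calling $B_x$ encodes $x=\mathit{false}$. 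Each clause $C$ gets a \emph{clause gadget} with a secret $s_C$ that must reach a terminal $t_C$ and can travel only along one of three ``wires'', one per literal, where the wire for a literal $\ell$ is passable precisely when $\ell$ is true under the encoded assignment (so $s_C$ reaches $t_C$ iff $C$ has a satisfied literal); the wires are connected to the variable gadgets through ``literal'' vertices and constrained by further negative goals. From a satisfying assignment one builds a plan by executing, for each variable, the calls dictated by its truth value and then routing each $s_C$ along a satisfied wire; conversely, from a solution plan one reads off a truth assignment from the variable gadgets and argues that every clause must have a satisfied literal. The graph and goal set are clearly of polynomial size.

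The main obstacle is that a plan may execute its calls in any order and may repeat calls arbitrarily, so the gadgets must be made robust against this freedom: one must rule out, for instance, that the solver routes a clause secret through a wire \emph{before} that wire has been contaminated by its variable gadget, or that a non-obvious ordering of calls circumvents the intended exclusivity. Arranging the incidences of $G_\varphi$ and the negative goals so that the encoding is faithful no matter how the plan is scheduled, while keeping all gadgets of polynomial size, is the technical heart of the construction; once this is done, correctness of the reduction together with the polynomial plan-length bound for membership in NP completes the proof.
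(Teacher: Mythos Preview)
Your NP-membership argument is essentially the paper's: from any solution $P$, extract for each positive goal a witnessing chain of at most $d(n-1)$ calls, take their union in the order induced by $P$, and observe that a subsequence of $P$ can only make fewer positive fluents true, so the negative goals survive. The paper makes exactly this argument (without your separate $n^2$ bound for depth~$1$, which is fine but unnecessary).

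For NP-hardness your overall strategy---a SAT reduction with variable gadgets enforcing a binary choice and clause gadgets demanding a satisfied literal---matches the paper, but your sketch stops short of a working construction. Your variable gadget (a pivot $m_x$ with sole neighbours $A_x,B_x$ and goals $\neg K_{A_x}s_{B_x}$, $\neg K_{B_x}s_{A_x}$) is hard to wire into clause gadgets: once $A_x$ and $B_x$ acquire further neighbours so that literals can interact with clauses, those negative goals constrain the \emph{whole} graph, and you must rule out every alternative route by which $s_{B_x}$ might reach $A_x$. You correctly flag this robustness issue as ``the technical heart'' but do not resolve it, so as it stands the hardness direction is a plan rather than a proof.

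The paper avoids this difficulty by a different and rather neat mechanism. There is a single source agent $a$ adjacent to every literal node $x,\overline{x}$, and for each clause $c_j$ (adjacent to its literal nodes) the goal $K_{c_j}s_a$ forces $s_a$ to pass through some literal of $c_j$. Exclusivity of $x$ versus $\overline{x}$ is enforced \emph{temporally} rather than by graph separation: a second secret $s_{b_x}$ must travel from $b_x$ to $d_x$ through one of $x,\overline{x}$ (goal $K_{d_x}s_{b_x}$), and the goals $\neg K_{d_x}s_a$ and $\neg K_{c_j}s_{b_x}$ together imply that whichever literal node $s_{b_x}$ traverses cannot carry $s_a$---neither before (else $s_a$ would be carried on to $d_x$) nor after (else $s_{b_x}$ would be carried on to $c_j$). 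This before/after argument is precisely what makes the reduction robust against arbitrary orderings and repetitions of calls without needing to isolate the literal nodes from the rest of the graph.
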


\begin{proof}
We first show that Gossip-neg $\in$ NP.
We will show that if a solution plan $P$ exists then there is a solution plan $P'$
of length no greater than $md(n-1)$, where $m$ is the number of goal fluents
and $d$ the maximum epistemic depth of goal fluents. The validity of a plan
of this length can clearly be verified in polynomial time.

Consider a goal $g = K_{i_1} \ldots K_{i_r} s_j$ (where $r \leq d$).
In $P$ there must be a sequence of \call p q actions where the
edges $\{p,q\}$ in the graph $G$ form a path from $j$ to $i_1$
passing through $i_{r}, \ldots, i_{2}$ in this order. There may be
many such paths: for each goal $g$ let $path(g)$ be one such path.
Divide $path(g)$ into subpaths $j \rightarrow i_r$, $i_r \rightarrow
i_{r-1}$, $\ldots$, $i_2 \rightarrow i_1$. If any of these subpaths
contains a cycle, this cycle can be eliminated from $path(g)$. Call
the resulting reduced path $path'(g)$. We can see that each subpath
in $path'(g)$ is of length no greater than $n-1$ (otherwise it would
contain a cycle). Thus, $|path'(g)| \leq r(n-1) \leq d(n-1)$. Each
goal $g$ can therefore be achieved by a subset of the actions of $P$
(corresponding to $path'(g)$). Let $P'$ be identical to $P$ except
that we only keep the actions \call p q such that the corresponding
edge $\{p,q\}$ belongs to some $path'(g)$. $P'$ then constitutes a
valid plan and is of length at most $md(n-1)$. It follows that
Gossip-neg $\in$ NP since the validity of a plan of this length can
be verified in polynomial time. Trivially, we also have Gossip-neg(1)
$\in$ NP since Gossip-neg(1) is a subproblem of Gossip-neg.

To complete the proof, it suffices to give a polynomial reduction from
the well-known NP-complete problem SAT to Gossip-neg(1).
Let $I_{\rm SAT}$ be an instance of SAT.
We will construct a graph $G$ and a list of goals such that the corresponding instance $I_{\rm Gossip}$
of Gossip-neg(1) is equivalent to $I_{\rm SAT}$.
Recall that the nodes of $G$ are the agents and the edges of $G$ the communication links
between agents.

For each propositional variable $x$ in $I_{\rm SAT}$, we add four
nodes $x$, $\overline{x}$, $b_x$, $d_x$ to $G$ joined by the edges
shown in Figure~\ref{fig:SATred}(b). There is a source node $a$ in
$G$ and edges $(a,x)$, $(a,\overline{x})$ for each variable $x$ in
$I_{\rm SAT}$. For each clause $c_j$ in $I_{\rm SAT}$, we add a node
$c_j$ joined to the nodes corresponding to the literals of $c_j$.
This is illustrated in Figure~\ref{fig:SATred}(a) for the clause $c_j
= \overline{x} \vee y \vee z$. The solution plan to $I_{\rm Gossip}$
will make the secret $s_a$ transit through $x$ (on its way from $a$
to some clause node $c_j$) if and only if $x=true$ in the
corresponding solution to $I_{\rm SAT}$.

For each clause $c_j$ in $I_{\rm SAT}$, $G$ contains a clause gadget
as illustrated in Figure~\ref{fig:SATred}(a) for the clause
$\overline{x} \vee y \vee z$. We also add $K_{c_j} s_a$ to the set of
goals. Clearly, the secret $s_a$ must transit through one of the
nodes corresponding to the literals of $c_j$ ($\overline{x}$, $y$ or
$z$ in the example of Figure~\ref{fig:SATred}) to achieve the goal
$K_{c_j} s_a$.

To complete the reduction, it only remains to impose the constraint that $s_a$ transits through
at most one of the nodes $x$, $\overline{x}$, for each variable $x$ of $I_{\rm SAT}$.
This is achieved by the negation gadget shown in Figure~\ref{fig:SATred}(b) for each variable $x$.
We add the goals $K_{d_x} s_{b_x}$, $\neg(K_{d_x} s_a)$ for each variable $x$,
and the goal $\neg(K_{c_j} s_{b_x})$ for each variable $x$ and each clause $c_j$
(containing the literal $x$ or $\overline{x}$).
The goal  $K_{d_x} s_{b_x}$ ensures that the secret $s_{b_x}$ transits through $x$
or $\overline{x}$.
Suppose that $s_{b_x}$ transits through $x$:
then $s_a$ cannot transit through $x$ before $s_{b_x}$ (because of the goal $\neg(K_{d_x} s_a)$)
and cannot transit through $x$ after $s_{b_x}$ (because of the goal $\neg(K_{c_j} s_{b_x})$).
By a similar argument, if  $s_{b_x}$ transits through $\overline{x}$,
then $s_a$ cannot transit through $\overline{x}$. Thus, this gadget imposes
that $s_a$ transits through exactly one of the the nodes $x$, $\overline{x}$.

We have shown that $I_{\rm SAT}$ has a solution if and only if $I_{\rm Gossip}$ has a solution.
Since the reduction is clearly polynomial, this completes the proof.
\end{proof}

\thicklines \setlength{\unitlength}{1.8pt}

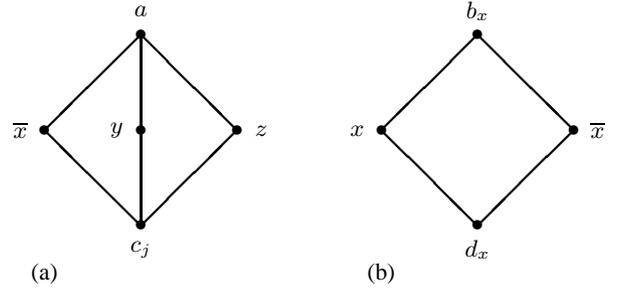
\begin{figure}[t]
\centering

\begin{picture}(130,75)(0,0)

\put(0,0){
\begin{picture}(60,65)(10,15)
\put(20,50){\makebox(0,0){$\bullet$}}
\put(40,30){\makebox(0,0){$\bullet$}}
\put(40,50){\makebox(0,0){$\bullet$}}
\put(60,50){\makebox(0,0){$\bullet$}}
\put(40,70){\makebox(0,0){$\bullet$}}
\put(40,30){\line(-1,1){20}}
\put(40,30){\line(0,1){20}}
\put(40,30){\line(1,1){20}}
\put(40,70){\line(-1,-1){20}}
\put(40,70){\line(0,-1){20}}
\put(40,70){\line(1,-1){20}}
\put(15,50){\makebox(0,0){$\overline{x}$}}
\put(35,50){\makebox(0,0){$y$}}
\put(65,50){\makebox(0,0){$z$}}
\put(40,75){\makebox(0,0){$a$}}
\put(40,25){\makebox(0,0){$c_j$}}

\put(20,20){\makebox(0,0){(a)}}
\end{picture}
}

\put(70,0){
\begin{picture}(60,65)(10,15)
\put(20,50){\makebox(0,0){$\bullet$}}
\put(40,30){\makebox(0,0){$\bullet$}}
\put(60,50){\makebox(0,0){$\bullet$}}
\put(40,70){\makebox(0,0){$\bullet$}}
\put(40,70){\line(-1,-1){20}}
\put(40,70){\line(1,-1){20}}
\put(20,50){\line(1,-1){20}}
\put(60,50){\line(-1,-1){20}}
\put(15,50){\makebox(0,0){$x$}}
\put(40,25){\makebox(0,0){$d_x$}}
\put(65,50){\makebox(0,0){$\overline{x}$}}
\put(40,75){\makebox(0,0){$b_x$}}

\put(20,20){\makebox(0,0){(b)}}
\end{picture}
}

\end{picture}

\caption{(a) gadget imposing the clause $c_j \ = \ \overline{x} \vee y \vee z$;
(b) gadget imposing the negation $\overline{x} = \neg x$.}
\label{fig:SATred}
\end{figure}

Our NP-completeness results are not affected by a restriction to
one-way communication, i.e. Directional-gossip-neg and
Directional-gossip-neg($1$) are both NP-complete, by exactly the same
proof as for Theorem~\ref{thm:unbounded}. A similar remark holds for
Parallel-gossip-neg and Parallel-gossip-neg($1$).

\section{Complexity of gossiping with variable secrets}  \label{sec:change}

\thicklines \setlength{\unitlength}{1.4pt}
\begin{figure*}[t]
\centering
\begin{picture}(220,105)(0,0)
\put(0,100){\line(1,0){220}}
\put(187,85){\makebox(0,0){\textbf{PSPACE-complete}}}
\put(48,85){\fbox{STRIPS planning}} \put(0,71){\line(1,0){220}}
\put(195,50){\makebox(0,0){\textbf{NP-complete}}}
\put(45,58){\fbox{Gossip-neg-change}} \put(55,40){\fbox{Gossip-neg}}
\put(0,30){\line(1,0){220}} \put(0,29.5){\line(1,0){220}}
\put(55,13){\fbox{Gossip-pos}}
\put(197,14){\makebox(0,0){\textbf{polynomial}}}
\put(0,0){\line(1,0){220}}
\end{picture}
\caption{Complexity results for different decision versions of the gossip problem.}
\label{fig:gossip-sat}
\end{figure*}

Up to now we have assumed that the secrets $s_i$ are constants.
We now introduce a new kind of action CHANGE$_i$ which simulates what happens 
when agent $i$ changes his secret (which we imagine corresponds, for example, to his
password). The effect of action CHANGE$_i$ is to render all fluents
of the form $K_{i_1},\ldots,K_{i_r} s_i$ false,
for $i_r \neq i$, since agent $i_r$
does not know the new value of $s_i$.
\fmnote{
Phrase un peu modifiee. 
}
%
%We now introduce a new kind of action CHANGE$_i$ which changes the value
%of $s_i$.
%Only agent $i$ has the right to
%change the value of $s_i$. The action CHANGE$_i$ also has the side effect of rendering all
%fluents of the form $K_{i_1} \ldots K_{i_r} s_i$ false, 
%for $i_r \neq i$, since agent $i_r$
%does not know the new value of $s_i$.
These new actions allow us to solve certain gossip problems which cannot be solved without them. For example,
consider two agents and the set of goals $\{K_1 s_2, \neg K_2 s_1\}$. In Gossip-neg
there is no solution to this planning problem, since the goal $K_1 s_2$ requires the action \call 1 2
which also establishes $K_2 s_1$. However, the plan
(\call 1 2, CHANGE$_1$) achieves the goals $K_1 s_2$ and $\neg K_2 s_1$.
An example of this plan is exchanging telephone numbers with someone
and then promptly changing one's own number.
Denote by Gossip-neg-change the version of Gossip-neg with the new CHANGE$_i$ actions.
Although the CHANGE$_i$ actions can help to solve more problems, it turns out that
Gossip-neg-change is in the same complexity class as Gossip-neg, as we now prove.

\begin{theorem}
Gossip-neg-change is NP-complete.
\end{theorem}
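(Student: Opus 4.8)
The plan is to show that Gossip-neg-change is NP-complete in two parts, following the structure of the proof of Theorem~\ref{thm:unbounded}. Since Gossip-neg is a subproblem of Gossip-neg-change (simply never use a CHANGE action), and Gossip-neg(1) is already NP-hard, NP-hardness of Gossip-neg-change is immediate. So the only real work is to establish membership in NP, i.e.\ to bound the length of a shortest solution plan by a polynomial in the size of the input.

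First I would argue that we may assume any plan uses at most $n$ CHANGE actions, and in fact at most one CHANGE per agent: since the effect of CHANGE$_i$ depends only on which agent the secret belongs to (not on how many times it has been changed), two consecutive CHANGE$_i$ actions with no intervening \call action involving $i$ collapse to one, and more generally, in a shortest plan, only the \emph{last} CHANGE$_i$ has any effect on the goal state that the earlier ones don't also have — so all but the last CHANGE$_i$ can be deleted. This gives at most $n$ CHANGE actions. The harder direction is to bound the number of \call actions. Here the idea of the proof of Theorem~\ref{thm:unbounded} — keep only the calls lying on a short path $path'(g)$ realising each positive goal $g$ — needs to be adapted, because now the \emph{order} of calls relative to the CHANGE actions matters: a positive goal $K_{i_1}\cdots K_{i_r} s_j$ is satisfied by a path of calls from $j$ to $i_1$ only if, after the last CHANGE$_j$ (if any), the relevant secret value is propagated; and a negative goal $\neg(K_{i_1}\cdots K_{i_r} s_j)$ must continue to hold.

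The key step is therefore the following: partition the timeline of the plan by the (at most $n$) CHANGE actions into at most $n+1$ \emph{phases}. Within each phase, for each positive goal $g = K_{i_1}\cdots K_{i_r} s_j$ that first becomes true in that phase, pick a shortest realising sequence of calls $path'(g)$ as in Theorem~\ref{thm:unbounded}, of length at most $r(n-1) \le d(n-1)$, and keep only those calls (in their original relative order, within the original plan). Keeping only these calls, together with the at most $n$ CHANGE actions, yields a plan $P'$. Deleting calls can only make fewer positive fluents true and hence cannot violate a negative goal, so all negative goals of $P$ are preserved in $P'$; and each positive goal $g$ is still achieved because we retained a full realising path for it occurring at the appropriate point relative to the CHANGE actions. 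The total length of $P'$ is at most $md(n-1) + n$, which is polynomial in the input size (recall the input is assumed to have size at least $n+d+m$). Hence a nondeterministic machine can guess $P'$ and verify it in polynomial time, so Gossip-neg-change $\in$ NP.

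The main obstacle I anticipate is making precise the claim that dropping calls cannot invalidate a positive goal while preserving negative goals once CHANGE actions are in play — one has to check that a retained \call $p$ $q$ still has the same effect in $P'$ as in $P$, which is true because calls have no preconditions and the set of secrets known after a prefix of $P'$ is a subset of the set known after the corresponding prefix of $P$, so the monotone propagation argument of Theorem~\ref{thm:unbounded} still goes through, with the CHANGE actions acting as "resets" that are respected because we keep them all and in place. A minor subtlety is that a positive goal might become true, then be falsified by a later CHANGE, then become true again; we should pick $path'(g)$ from the final time $g$ becomes true, i.e.\ after the last CHANGE that affects it. Once this bookkeeping is set up, the argument closes exactly as in the proof of Theorem~\ref{thm:unbounded}.
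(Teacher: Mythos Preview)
Your NP-membership argument is essentially the paper's: retain only the last occurrence of CHANGE$_i$ for each agent (at most $n$ such actions) and, for each positive goal, retain a short realising path of calls of length at most $d(n-1)$, giving the same overall bound $n+md(n-1)$. Your explicit phase decomposition and your remark that deleting calls can only shrink the set of true positive fluents (hence preserves all negative goals) make the justification a little more careful than the paper's one-line ``eliminate all useless actions'', but the content is identical.

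Your NP-hardness argument, however, has a real gap. You write that hardness is immediate because ``Gossip-neg is a subproblem of Gossip-neg-change (simply never use a CHANGE action)''. This implication points the wrong way for a hardness reduction. Enlarging the action set can only \emph{increase} the set of solvable instances, so an instance that is unsolvable in Gossip-neg may become solvable in Gossip-neg-change --- the paper itself gives the two-agent example with goals $\{K_1 s_2,\ \neg K_2 s_1\}$. Consequently the map $\phi \mapsto I_{\rm Gossip}$ from the proof of Theorem~\ref{thm:NPC} is not automatically a valid reduction to Gossip-neg-change: you must check that, on these particular instances, the CHANGE actions cannot turn a NO instance into a YES instance. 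The paper does exactly this, observing that on the SAT-derived instances the only CHANGE actions touching any goal are CHANGE$_a$ and CHANGE$_{b_x}$, and that these cannot be used without destroying positive goals that must hold at the end, so the original correctness analysis of the reduction still applies. Without some such argument your hardness claim is unsupported.
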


\begin{proof}
It is simple to verify that the reduction from SAT given in the proof of
Theorem~\ref{thm:NPC} remains valid:  in the instances corresponding to instances of SAT,
the actions CHANGE$_a$ and CHANGE$_{b_x}$ cannot be used without
destroying goals which must be attained.

Thus, to complete the proof, it suffices to show that Gossip-neg-change $\in$ NP.
As in the proof of Theorem~\ref{thm:unbounded}, it suffices to show that if a solution plan $P$
exists, then there is a solution plan $P'$ of length no greater than a polynomial function
of $m$, $d$ and $n$.
To transform $P$ into an equivalent plan $P'$, we can eliminate all useless actions.
We consider an action $a$ to be useless in $P$ if all fluents $K_{i_1} \ldots K_{i_r} s_j$
it achieves were already true or CHANGE$_j$ occurs after $a$ in $P$.
Since fluents  $K_{i_1} \ldots K_{i_r} s_j$ can only become true at most once after the last
occurrence of CHANGE$_j$ in $P$, we can deduce that the number of actions in $P'$
is bounded above by $md(n-1)$ (as in the proof of Theorem~\ref{thm:unbounded}).
If CHANGE$_i$ occurs in $P$, then all  its occurrences except the last can be deleted
without affecting the validity of the plan. Thus the total number of actions in $P'$
is bounded above by $n+md(n-1)$, which completes the proof.
\end{proof}

In the problem Gossip-neg-change, the CHANGE$_i$ actions have no
preconditions. If there are different actions CHANGE$_i$ depending on
the values of some subset of the secrets, then it is not difficult to
see that we can simulate the version of classical STRIPS planning in
which all actions have a single effect, which is known to be
PSPACE-complete~\cite{Bylander94}. A more interesting avenue of
future research is perhaps to investigate restricted versions of
Gossip-neg or Gossip-neg-change which can be solved in polynomial
time. As a simple example, suppose that the agents can be arranged in
a hierarchy so that each agent $i$ belongs to a level $L_i$ and the
goal is to communicate all secrets upwards in the hierarchy but not
downwards. A solution consists in, for each level $L$ in turn
starting with the lowest level, all agents at this level communicate
their secrets to all agents at level $L+1$ in the hierarchy, then all
agents at level $L+1$ change their secrets so that the agents at
level $L$ no longer know these secrets. In this way all secrets
percolate up the hierarchy but not down.

\section{Discussion and conclusion}  \label{sec:discussion}

We summarize our complexity results in Figure~\ref{fig:gossip-sat}.
In each case the problem is the decision problem, i.e. testing the
existence of a solution plan. The general conclusion that can be
drawn from this figure is that many interesting epistemic planning
problems are either solvable in polynomial time or are NP-complete,
thus avoiding the PSPACE-complete complexity of planning. We consider
the gossip problem to be a foundation on which to base the study of
richer epistemic planning problems involving, for example,
communication actions with preconditions involving the contents of
the messages received by the agent. Previous work on temporal
planning may help to provide a more realistic model of communication
actions in which, for example, the length of a call is a function of
the quantity of information exchanged, and correct communication
during a telephone call requires concurrency of the speaking and
listening actions of the two agents~\cite{CushingKMW07,CooperMR13}.

Restricting our attention to the epistemic version of the classical
gossip problem in which all positive epistemic goals of depth $d$
must be attained, we have generalised many results from the classical
gossip problem to the epistemic version. We have shown that for a
complete graph $G$, no protocol exists which solves Gossip$_G$($d$)
in less than $(d+1)(n-2)$ calls. This was known to be true for
$d=1$~\cite{Baker1972,Hajnal1972}. We have given a protocol which
uses only this number of calls (for any graph $G$ containing
$K_{2,n-2}$ as a subgraph). In the case of one-way communications, we
have again generalised the optimal protocol from the classical gossip
problem to the epistemic version. This protocol requires only
$(d+1)(n-1)$ calls. When calls can be performed in parallel, and the
aim is to minimise the number of steps rather than the number of
calls, we have again generalised the optimal protocol from the
classical gossip problem to the epistemic version. In this case, only
$O(d \log n)$ steps are required.

There remain many interesting open problems concerning the
optimisation version of the gossip problem: given any graph $G$,
determine the minimum number of calls required to attain a set of
goals. For example, in the case of one-way communications, our
optimal protocol requires a Hamiltonian path in a graph and detecting
a Hamiltonian path is NP-complete~\cite{GareyJ79}. However, it may be
that another optimal protocol exists which does not require the
existence of a Hamiltonian path. A similar situation occurred in the
case of two-way communications, in which we gave a protocol which
depends on the existence of  $K_{2,n-2}$ as a subgraph, and this
graph can be detected in polynomial time. The complexity of the
problem of minimising the number of calls (whether two-way or
one-way) in an arbitrary graph $G$ is still open.

In this paper we have assumed a centralised approach in which a
centralised planner decides the actions of all agents. Other workers
have studied the classical gossip problem from a completely different
perspective, assuming that all agents are
autonomous~\cite{AttamahDGH14,DitmarschEPRS15,GrossiEtal16}. 
An interesting avenue of future
research would be to consider the generalised gossip problem in this
framework.

\bibliography{biblio}

\end{document}